\newtheorem{assumption}{Assumption}
\newcommand{\num}[1]{\relax\ifmmode \mathbb #1\else $\mathbb #1$\fi}
\newcommand{\nnnum}[1]{\relax\ifmmode 
  {\mathbb #1}_{\geq 0} \else ${\mathbb #1}_{\geq 0}$
  \fi}
\newcommand{\npnum}[1]{\relax\ifmmode 
  {\mathbb #1}_{\leq 0} \else ${\mathbb #1}_{\leq 0}$
  \fi}
\newcommand{\pnum}[1]{\relax\ifmmode 
  {\mathbb #1}_{> 0} \else ${\mathbb #1}_{> 0}$
  \fi}
\newcommand{\nnum}[1]{\relax\ifmmode 
  {\mathbb #1}_{< 0} \else ${\mathbb #1}_{< 0}$
  \fi}
\newcommand{\plnum}[1]{\relax\ifmmode 
  {\mathbb #1}_{+} \else ${\mathbb #1}_{+}$
  \fi}
\newcommand{\nenum}[1]{\relax\ifmmode 
  {\mathbb #1}_{-} \else ${\mathbb #1}_{-}$
  \fi}
\newcommand{\reals}{{\num R}}                    %reals
\newcommand{\M}{\mathcal{M}}
\newcommand{\X}{\mathcal{X}}
\newcommand{\F}{\mathcal{F}}
\newcommand{\phit}[3]{{p_{#1,#2}{(#3)}}}
\newcommand{\Expectation}[1]{\mathbb{E}\left[#1\right]}
\newcommand{\observ}{y}
\newcommand{\Observ}{y}
\newcommand{\Unsafe}{\mathcal{U}}
\newcommand{\modelname}{NMC\xspace}
\newcommand{\partition}[2]{{\mathcal{P}_{#1,#2}}}
\newcommand{\counter}[2]{{\mathit{count}_{#1,#2}}}
\newcommand{\batch}{{\mathit{b}}}
\newcommand{\Mlplatoon}{{$\mathsf{MLplatoon}$\xspace}}
\newcommand{\Merging}{{$\mathsf{Merging}$\xspace}}
\newcommand{\DetectingPedestrian}{{$\mathsf{DetectBrake}$\xspace}}
\newcommand{\toolname}{{{\sf HooVer}\xspace}}
\newcommand{\two}[4]{
  \parbox{.95\columnwidth}{\vspace{1pt} \vfill
    \parbox[t]{#1\columnwidth}{#3}%
    \parbox[t]{#2\columnwidth}{#4}%
  }}
\lstdefinelanguage{pseudocode}{
	basicstyle=\scriptsize,
	keywordstyle=\bf \scriptsize,
	identifierstyle=\it \scriptsize,
%	emphstyle=\tt \figuresize,
	mathescape=true,
	tabsize=20,
	xleftmargin=4.0ex,
	sensitive=false,
	columns=fullflexible,
	keepspaces=false,
	%flexiblecolumns=true,
	%  basewidth=0.5em,
	basewidth=0.05em,
	moredelim=[il][\rm]{//},
	moredelim=[is][\sf \figuresize]{!}{!},
	moredelim=[is][\bf \figuresize]{*}{*},
	keywords={automaton, algorithm, and, 
		break,
		choose,const,continue, components,
		discrete, do,
		eff, external,else, elseif, evolve, end, each, exit,
		fi,for, forward, from, find, 
		hidden,
		in,input,internal,if,invariant, initially, imports,
		let,
		mode,
		or, output, operators, od, of,
		pre,
		return,
		such,satisfies, stop, signature, simulation, sample,
		trajectories,trajdef, transitions, that,then, type, types, to, tasks,
		variables, vocabulary, 
		when,where, with,while},
	emph={set, seq, tuple, map, array, enumeration},   
	literate=
	{(}{{$($}}1
	{)}{{$)$}}1
	% LaTeX math symbols
	{\\in}{{$\in\ $}}1
	{\\preceq}{{$\preceq\ $}}1
	{\\subset}{{$\subset\ $}}1
	{\\subseteq}{{$\subseteq\ $}}1
	{\\supset}{{$\supset\ $}}1
	{\\supseteq}{{$\supseteq\ $}}1
	{\\forall}{{$\forall$}}1
	{\\le}{{$\le\ $}}1
	{\\ge}{{$\ge\ $}}1
	{\\gets}{{$\gets\ $}}1
	{\\cup}{{$\cup\ $}}1
	{\\cap}{{$\cap\ $}}1
	{\\langle}{{$\langle$}}1
	{\\rangle}{{$\rangle$}}1
	{\\exists}{{$\exists\ $}}1
	{\\bot}{{$\bot$}}1
	{\\rip}{{$\rip$}}1
	{\\emptyset}{{$\emptyset$}}1
	{\\notin}{{$\notin\ $}}1
	{\\not\\exists}{{$\not\exists\ $}}1
	{\\ne}{{$\ne\ $}}1
	{\\to}{{$\to\ $}}1
	{\\implies}{{$\implies\ $}}1
	% LSL symbols (one-character)
	{<}{{$<\ $}}1
	{>}{{$>\ $}}1
	{=}{{$=\ $}}1
	{~}{{$\neg\ $}}1
	{|}{{$\mid$}}1
	{'}{{$^\prime$}}1
	% LSL symbols (two characters)
	{\\A}{{$\forall\ $}}1
	{\\E}{{$\exists\ $}}1
	{\\/}{{$\vee\,$}}1
	{\\vee}{{$\vee\,$}}1
	{/\\}{{$\wedge\,$}}1
	{\\wedge}{{$\wedge\,$}}1
	{=>}{{$\Rightarrow\ $}}1
	{->}{{$\rightarrow\ $}}1
	{<=}{{$\Leftarrow\ $}}1
	{<-}{{$\leftarrow\ $}}1
	%        {<=}{{$\leq$}}1
	%        {>=}{{$\geq$}}1
	{~=}{{$\neq\ $}}1
	{\\U}{{$\cup\ $}}1
	{\\I}{{$\cap\ $}}1
	{|-}{{$\vdash\ $}}1
	{-|}{{$\dashv\ $}}1
	{<<}{{$\ll\ $}}2
	{>>}{{$\gg\ $}}2
	{||}{{$\|$}}1
	%%       {\[\]}{{\[\,\]}}2 {\{\}}{{\{\,\}}}2
	%%        {[}{{$\langle$}}1
	%%        {]}{{$\rangle$}}1
	{[}{{$[$}}1
	{]}{{$\,]$}}1
	{[[}{{$\langle$}}1
	{]]]}{{$]\rangle$}}1
	{]]}{{$\rangle$}}1
	{<=>}{{$\Leftrightarrow\ $}}2
	{<->}{{$\leftrightarrow\ $}}2
	{(+)}{{$\oplus\ $}}1
	{(-)}{{$\ominus\ $}}1
	{_i}{{$_{i}$}}1
	{_j}{{$_{j}$}}1
	{_{i,j}}{{$_{i,j}$}}3
	{_{j,i}}{{$_{j,i}$}}3
	{_0}{{$_0$}}1
	{_1}{{$_1$}}1
	{_2}{{$_2$}}1
	{_n}{{$_n$}}1
	{_p}{{$_p$}}1
	{_k}{{$_n$}}1
	{-}{{$\ms{-}$}}1
	{@}{{}}0
	{\\delta}{{$\delta$}}1
	{\\R}{{$\R$}}1
	{\\Rplus}{{$\Rplus$}}1
	{\\N}{{$\N$}}1
	{\\times}{{$\times\ $}}1
	{\\tau}{{$\tau$}}1
	{\\alpha}{{$\alpha$}}1
	{\\beta}{{$\beta$}}1
	{\\gamma}{{$\gamma$}}1
	{\\ell}{{$\ell\ $}}1
%	{--}{{$-\ $}}1
	{\\TT}{{\hspace{1.5em}}}3        
}
\lstdefinelanguage{pseudocodeNums}[]{pseudocode}
{
	numbers=left,
	numberstyle=\tiny,
	stepnumber=2,
	numbersep=4pt
	%  firstnumber=1
}
\definecolor{codegreen}{rgb}{0,0.6,0}
\definecolor{codegray}{rgb}{0.5,0.5,0.5}
\definecolor{codepurple}{rgb}{0.58,0,0.82}
\definecolor{backcolour}{rgb}{0.95,0.95,0.92}
\lstdefinestyle{mystyle}{
  backgroundcolor=\color{backcolour},   commentstyle=\color{codegreen},
  keywordstyle=\color{magenta},
  numberstyle=\tiny\color{codegray},
  stringstyle=\color{codepurple},
  basicstyle=\ttfamily\scriptsize,
  breakatwhitespace=false,         
  breaklines=true,                 
  captionpos=b,                    
  keepspaces=true,                 
  numbers=left,                    
  numbersep=5pt,                  
  showspaces=false,                
  showstringspaces=false,
  showtabs=false,                  
  tabsize=2
}
\title{Verification and Parameter Synthesis for Stochastic Systems using  Optimistic Optimization}
\author{Negin Musavi\inst{1}, Dawei Sun\inst{1}, Sayan Mitra\inst{1}, \\ Geir Dullerud\inst{1}, and Sanjay Shakkottai\inst{2}}
\institute{\email{\{nmusavi2,daweis2,mitras,dullerud\}@illinois.edu} \\ ${}^1$University of Illinois at Urbana Champaign \\
\email{sanjay.shakkottai@utexas.edu}
\\ ${}^2$University of Texas  at Austin}
\titlerunning{Verification and Parameter Synthesis for Stochastic Systems using  Optimistic Optimization}
\authorrunning{Musavi, Sun, Mitra, Dullerud, and Shakkottai}
\begin{document}
\maketitle

\begin{abstract}
We present  an algorithm for formal verification and parameter synthesis of continuous state space Markov chains.
% Sayan: This may be too specific for the first sentence
%initialized to a set of states and parameters, respectively.
%
This class of problems capture design and analysis of a wide variety of autonomous and cyber-physical systems defined by nonlinear and black-box modules.
In order to solve these problems, one has to maximize certain probabilistic objective functions over all choices of initial states and parameters. 
In this paper, we identify the assumptions that make it possible to view this problem as a multi-armed bandit problem.
Based on this fresh perspective, we propose an algorithm (HOO-MB) for solving the problem that carefully instantiates an existing bandit algorithm---Hierarchical Optimistic Optimization---with appropriate parameters.
As a consequence, we obtain theoretical regret bounds on sample efficiency of our solution that depend on key problem parameters like  smoothness, near-optimality dimension, and  batch size.
The batch size parameter enables us to strike a balance between the sample efficiency and the memory usage of the algorithm.
Our experiments, using the tool \toolname{}, suggest that the approach scales to realistic-sized problems and is often more sample-efficient compared to PlasmaLab---a leading tool for verification of stochastic systems. 
Specifically, \toolname{} has distinct advantages in analyzing models in which the objective function has sharp slopes.
In addition, \toolname{} shows promising behaviour in parameter synthesis for a linear quadratic regulator (LQR) example.
\end{abstract}
\section{Introduction}
\label{sec:intro}
The  multi-armed bandit problem  is an idealized model for sequential decision making in unknown random environments. It is frequently used to study exploration-exploitation trade-offs~\cite{thompson1935theory,thompson1933likelihood,robbins1952some}. Significant advances have been made in the last decade,  and  connections have been drawn with optimization, auctions, and online learning~\cite{munos:hal-2014,Bubeck:2011,Bubeck12}.
Recently developed bandit algorithms for  {\em black-box optimization\/}~\cite{munos:hal-2014} can strike a balance between exploiting the most promising parts of the function's domain, and exploring the uncertain parts, and find nearest to optimal solutions for a given sampling budget.
%
% These are promising developments from the  perspective of quantitative evaluation of systems. Quantitative verification can be viewed as optimization of a function that maps system's executions to values. Black-box optimization makes the algorithms applicable to black-box systems, for which executions can be sampled, but detailed internal workings are unavailable. Further, minimizing execution sampling budget is  attractive where the set of behaviors is large and individual tests are expensive.
%

In this paper, we present a new verification and parameter synthesis algorithm that uses Bandit algorithms for discrete time continuous state-space stochastic models. There have been several works that address verification and parameter synthesis of stochastic systems that require some strong assumptions on the system dynamics~\cite{abate2008probabilistic,tkachev2011infinite,esmaeil2013adaptive}. Other several works such as~\cite{PrajnaJ04,jagtap2020formal} address this problems using barrier functions. It is noted that these works are based on the knowledge on the probability transition between the state which may not be applicable in realistic scenarios. The notable methods related to this class include MODEST for networks of probabilistic automata~\cite{hartmanns2009modest},
PlasmaLab~\cite{Boyer:2013:PFD}, the reinforcement learning-based algorithm of~\cite{henriques2012statistical,david2014time} implemented in PRISM~\cite{HKNP06} and UPPAAL. Approaches for verifying Markov decision process (MDP) with restricted types of schedulers are presented in~\cite{lassaigne2015approximate,hartmanns2014modest,budde2018statistical}.

Here we focus on the verification and parameter synthesis of discrete time Markov chains (MC) over continuous state spaces and uncountable sets of initial states or parameters, where the exact knowledge of probabilistic evolution of states are not required. In other words, we want to find what choices of initial states or parameters maximize certain probabilistic objective functions over all choices of initial states or parameters, without having exact knowledge of the system dynamics.

% In this paper, we focus on discrete time Markov chains over continuous state spaces and uncountable sets of initial states. 
%
% In other words, these are continuous state Markov Decision Processes (MDPs) where the adversary gets to only initialize\footnote{A finite number non-deterministic action choices can  be encoded in the choice of the initial state.}.
%%
% This class of models is standard for  bounded time horizon verification and monitoring  of systems such as, autonomous vehicles, driver assist control systems, and manufacturing plants. Stochastic transitions  capture the dynamic evolution of the system, often inferred from system identification and past data, while the non-deterministic initialization captures sensing and estimation errors.

By building this connection with the Bandit literature, we aim to gain not only new algorithms for verification and parameter synthesis, but also new types of bounds on the sample efficiency. 
We propose a tree-based algorithm for solving this problem using a modification of  {\em hierarchical optimistic optimization (HOO)}~\cite{bubeck2011x}. The basic HOO algorithm relies on the well-known   {\em upper confidence bound (UCB)\/} approach for searching the domain of a black-box function~\cite{lai1985asymptotically}.
Our algorithm, called HOO-MB (Algorithm~\ref{Al:HOO_batch}), improves HOO in two significant ways: (1) it relaxes the requirement of an exact semi-metric that captures the smoothness of $f$, but instead, it works with two smoothness parameters~\cite{grill2015black,sen2019noisy}, 
 (2) It takes advantage of batched samples to reduce the impact of the variance from the noisy samples while keeping the tree size small.
We obtain regret bounds (Theorem~\ref{Th:regret_HOO_batch}) on the difference between  the maximum value $f(\bar{x}_N)$ computed by HOO-MB and  the actual maximum $f(x^*)$, as a function of the sampling budget $N$,  the smoothness parameters, and the {\em near-optimality dimension} of $f$. 
%This bound is  fundamentally different from the existing performance bounds in the SMC literature.  For example, the bounds relevant for tools like PlasmaLab, use  Monte Carlo sampling, Chernoff bounds, or sequential hypothesis testing.
%
We also show empirically that batched simulations can help dramatically reduce the number of nodes in the tree and therefore reduce the  running time and memory usage.

We have implemented HOO-MB in a open source tool called \toolname{}\footnote{The tool and all the benchmarks are available from \url{https://www.daweisun.me/hoover/}.}.  The user only has to provide a \texttt{Python} class specifying the transition kernel, the unsafe set, parameter uncertainty, and the initial uncertainty. There is no requirement for learning  a new  language.
We have created a suite of benchmarks models capturing scenarios that are used for certification of autonomous vehicles and advanced driving assist systems~\cite{ISO26262} and evaluated  \toolname{} on these benchmarks.
Our evaluations suggest that
(1) as expected, the quality of the verification result (in this case, maximum probability of hitting an unsafe state) improves with the sampling budget $N$, (2) \toolname{}  scales to reasonably large models. In our experiments, the tool easily handled  models with $18$-dimensional state spaces, and initial uncertainty spanning $8$ dimensions, on a standard computer.
(3) Running time and memory usage can be controlled with the sampling batch sizes, and 
(4) \toolname{} is relatively insensitive to the smoothness parameters.
A  fair comparison of \toolname{} with other discrete state verification approaches is complicated because the guarantees are different and platform specific constants are difficult to factor out. We present a careful comparison with PlasmaLab in Section~\ref{sec:eval-mfhoo}. \toolname{} generally gets closer to the correct answer with  fewer samples than  PlasmaLab. For models with sharp slopes around the maxima, \toolname{} is more sample efficient. 
This suggests that HOO-MB may work with fewer samples in verification problems around hard to find bugs. Our preliminary results were presented in the workshop~\cite{SNR2020Program}.
We have also evaluated the performance of \toolname{} for parameter synthesis for a linear quadratic regulator (LQR) example, and the results shows the efficiency \toolname{} in tuning controller parameters.   
A related approach~\cite{ellen2015statistical}  uses the original HOO algorithm of~\cite{bubeck2011x} for model checking over continuous domains\footnote{We were unable to get this tool and run experiments.}. We believe that requiring upper bounds on the smoothness parameters of $f$, as required in HOO-MB, is  a less stringent requirement than the semi-metric used in~\cite{ellen2015statistical}.
\section{Model and problem statement}
\label{sec:prelims}

Let pair $(\X, \F_\X)$ be a {\em measurable space\/}, where $\F_\X$ is a $\sigma$-algebra over $\X$ and the elements of $\F_\X$ are referred to as {\em measurable sets}. Let $\mathbb{P}:\X \times \F_\X \rightarrow [0, 1]$ be a {\em Markovian transition kernel} on a measurable space $(\X, \F_\X)$, such that \begin{inparaenum}[(i)] \item for all $x \in \X$, $\mathbb{P}(x, \cdot)$ is a probability measure on $\F_\X$; and \item for all $\mathcal{A} \in \F_\X$, $\mathbb{P}(\cdot, \mathcal{A})$ is a $\F_\X$-measurable function. \end{inparaenum} Also let $\mathbb{P}_\beta$ be a Markovian transition kernel that depends on parameter $\beta \in \mathbb{R}^m$. A real-valued random variable\footnote{Namely, a function $X:\X\rightarrow \mathbb{R}$
for which the set $\{x\in \X: X(x)\leq r \}$  is measurable, for each $r\in \mathbb{R}$, with respect to a fixed designated measurable space $(X,\F_\X)$ equipped with a probability measure.}  $X$ is {\em $\sigma^2$-sub-Gaussian\/}, if for all $s \in \mathbb{R}$, $\mathbb{E}\left[\exp(s(X-\mathbb{E}X))\right] \leq \exp(\sigma^2 s^2/2)$ holds, where $\mathbb{E}$ denotes the expectation.

% \begin{definition}
% \label{def:mpis}
% A {\em Nondeterministically initialized Markov chain (\modelname)\/} $\M$ is defined by a triplet
% $((\X, \F_\X), \mathbb{P}, \Theta)$, with:
% \begin{inparaenum}[(i)]
% \item $(\X, \F_\X)$, a measurable space over state space $\X$;
% \item $\mathbb{P}:\X \times \F_\X \rightarrow [0, 1]$, a Markovian {\em transition kernel}; and
% \item $\Theta \subseteq \X$, the set of possible initial states.
% \end{inparaenum}
% \end{definition}

\begin{definition}
\label{def:mpis}
A {\em Nondeterministic Markov chain (\modelname)\/} $\M$ is defined by a quadruplet
$((\X, \F_\X), \mathbb{P}_{\beta}, \mathcal{B}, \Theta)$, with:
\begin{inparaenum}[(i)]
\item $(\X, \F_\X)$, a measurable space over the state space $\X$;
\item $\mathbb{P}_{\beta}:\X \times \F_\X \rightarrow [0, 1]$, a Markovian {\em transition kernel} depending on parameter $\beta \in \mathcal{B}$.
\item $\mathcal{B} \subseteq \mathbb{R}^m$, a set of possible parameters; and
\item $\Theta \subseteq \X$, the set of possible initial states.
\end{inparaenum}
\end{definition}

If the uncertainty is in the initial states, then the uncertainty in the initial states is modeled as a nondeterministic choice over the set $\Theta$ and the set $\mathcal{B}$ is a singleton. If the uncertainty is in the parameters, then the uncertainty in the parameters is modeled as a nondeterministic choice over the  set $\mathcal{B}$ and the set $\Theta$ is a singleton. The uncertainty in the  transitions is modeled as probabilistic choices by the transition kernel $\mathbb{P}_{\beta}$. We address two class of problems:

 \paragraph{Verification}
An {\em execution} of $\M$  of length $k$ is a sequence of states $\alpha = x_0 x_1 \cdots x_k$, where $x_0 \in \Theta$ and for all $i$, $x_i \in \X$. Given $x_0$ and a sequence of measurable sets of states $A_1,\ldots, A_k \in \F_\X$, the measure of the set of executions $\{ \alpha \ | \ \alpha_0 = x_0 \text{ and }  \alpha_i \in A_i, \forall \ i =1,\ldots, k\}$ is given by : 
%this probability is well defined and given by
\begin{align*}
\Pr(\{ \alpha \ | \ \alpha_0 = x_0 \text{ and } \alpha_i \in A_i, \forall \ i =1,\ldots, k \}) &\\ 
= \int_{A_1 \times \cdots \times A_k} \mathbb{P}_{\beta}(x_0, dx_1) \cdots \mathbb{P}_{\beta}(x_{k-1}, dx_k),
\end{align*}
which is a standard result and follows from the Ionescu Tulce{\u a} theorem \cite{ionescu1949mesures}\cite{petritis2012}.

Given an $\modelname$ $\M$ and a measurable unsafe set $\Unsafe \in \F_\X$, we are interested in evaluating the {\em worst case\/} probability of $\M$ hitting $\Unsafe$ over all possible nondeterministic choices of an initial state $x_0$ in $\Theta$. Once an initial state $x_0 \in \Theta$ is fixed, the probability of a set of paths is defined in the standard way. The details of the construction of the measure space over executions is not relevant for our work, and therefore, we give an abridged overview below.

We say that an execution $\alpha$ of length $k$ \emph{hits} the unsafe set $\Unsafe$ if there exists $i\in\{0,\ldots,k\}$, such that  $\alpha_i \in \Unsafe$. The complement of $\Unsafe$, the  {\em safe subset} of $\X$, is denoted by $\mathcal{S}$. The safe set is also a member of the $\sigma$-algebra $\F_\X$ since $\sigma$-algebras are closed under complementation. From a given initial state $x_0 \in \Theta$, the probability of $\M$ hitting $\Unsafe$ within $k$ steps is denoted by $\phit{k}{\Unsafe}{x_0}$. By definition, $\phit{k}{\Unsafe}{x_0} = 1$, if $x_0 \in \Unsafe$. For $x_0 \notin \Unsafe$ and $k \geq 1$,   
\begin{equation}
\label{eq:prob_int}
\phit{k}{\Unsafe}{x_0} = 1 - \int_{S \times \cdots \times S} \mathbb{P}_{\beta}(x_0, dx_1) \cdots \mathbb{P}_{\beta}(x_{k-1}, dx_k).
\end{equation}

We are interested in finding the {\em worst case} probability of hitting unsafe states over all possible initial states of $\M$. This can be regarded as solving, for some $k$, the following optimization problem:
\begin{align}
    \label{eq:SMCproblem} \sup\limits_{x_0 \in \Theta} \phit{k}{\Unsafe}{x_0}.
\end{align}

\paragraph{Parameter Synthesis}
Given an execution $\alpha$ of length $k$, let $r(\alpha, \beta)$ be a real-valued objective function. Then, we are interested in evaluating the maximum of expected objective function over all possible nondeterministic choices of the parameter $\beta$ in $\mathcal{B}$. This can be regarded as solving, for some $k$, the following optimization problem:
\begin{align}
    \label{eq:param_problem} \sup\limits_{\beta \in \mathcal{B}} \mathbb{E}[r(\alpha, \beta) | \beta],
\end{align}
where the expectation is over the stochasticity of the  transition and the initial state which is drawn from a given distribution.

We  note here that our approach solves the optimization problem of Equations~(\ref{eq:SMCproblem}) and~(\ref{eq:param_problem}) using samples of individual elements from $\Theta$ or $\mathcal{B}$, respectively.
For instance, for verification problem our approach does not rely on explicitly calculating the probability defined by Equation~(\ref{eq:prob_int}). Instead it relies on noisy observations about whether or not a sampled execution hits $\Unsafe$. Thus, the user only has to provide a {\em simulator} for the $\modelname$ $\M$ (i.e., the transition kernel $\mathbb{P}_{\beta}$), the parameter set $\mathcal{B}$, the initial set $\Theta$, and the unsafe set $\Unsafe$.

\subsection{A simple example}
\label{ex:simple}
Consider a \modelname of a particle moving randomly on a  plane. 
The  model \texttt{RandomMotion}  is specified as a Python class which is a subclass of \modelname for verification.
The initial set $\Theta \subseteq \reals^2$, specified by \texttt{set\_Theta()},  defines $\Theta = \{(x_1, x_2)~|~x_1 \in [1,2],\,x_2 \in [2,3]\}$. 
The unsafe set $\Unsafe$,  specified by the member function \texttt{is\_unsafe()}, 
%. 
defines $\Unsafe = \{(x_1, x_2)~|~x_1^2+x_2^2 > 4\}$. 
The \texttt{transition()} 
% \sayan{why a different font here?}
function describes the transition kernel $\mathbb{P}_{{\beta}}$. Given an input (pre)state $x$, \texttt{transition()} returns the post-state $x'$ of the transition by sampling  the measure $\mathbb{P}_{\beta}(x, \cdot)$. For this example, $x'$ is computed by adding \texttt{inc} to $x$ where \texttt{inc} is sampled from a $2$-dimensional Gaussian distribution with mean $\mu = (0, 0)$ %Geir: we are not using the matrix algebra so this is just an ordered pair
and covariance $\Sigma = \begin{bmatrix} \sigma^2 & 0\\ 0 & \sigma^2 \end{bmatrix}$.
We can write the  transition kernel explicitly by a density function: $ p(x')= \frac{1}{2\pi \sigma^2} \exp(-\frac{1}{2 \sigma^2}(x_1'-x_1)^2+(x_2'-x_2)^2)$. This function is continuous in $x$, and  therefore,  $\phit{k}{\Unsafe}{x_0}$ is also continuous over $\Theta$. For more complicated models, for example, the ones studied in Section \ref{sec:allbenchmarks},  \toolname{} does  not rely on explicit transition kernels, but only the  \texttt{transition()} function for sampling $\mathbb{P}_{\beta}(x, \cdot)$.
\begin{figure}
	\centering
	\hrule
	\two{.46}{.54}
	{\lstinputlisting[language=Python,lastline=11]{Brownian.py}}
	{\lstinputlisting[language=Python,firstline=11,lastline=20,firstnumber=11]{Brownian.py}}
	\hrule
	\caption{\small \toolname{} model description file for a simple random motion model.}
	\label{fig:brownian}
% \vspace{-0.5cm}
\end{figure}

\section{Verification and Parameter Synthesis with Hierarchical Optimistic Optimization}
\label{sec:multi-fid-background}

% \paragraph{Overview.}
%\label{sec:over}
We will solve the optimization problems of~(\ref{eq:SMCproblem}) and~(\ref{eq:param_problem}) using the {\em Hierarchical Optimistic Optimization algorithm with Mini-batches (HOO-MB)}. This is a variant of the {\em Hierarchical Optimistic Optimization (HOO)} algorithm~\cite{Bubeck:2011} from the  {\em multi-armed bandits} literature~\cite{munos:hal-2014,Bubeck12,sen2019noisy}. The setup is as follows: suppose we have a sampling budget of $N$ and want to maximize the function $f:\X \rightarrow \mathbb{R}$, which is assumed to have a unique global maximum  that achieves the value $f^* = \underset{x\in\X}{\sup}$ $f(x)$.
The algorithm gets to choose a sequence of sample points (arms) $x_1, x_2, \ldots x_N \in \X$, for which it receives the corresponding sequence of noisy {\em observations\/} (or {\em  rewards\/})
$\observ_{1}, \observ_{2}, \ldots, \observ_N$.
When the sampling budget $N$ is exhausted, the algorithm has to decide the optimal point $\bar{x}_{N} \in \X$ with the aim of minimizing  {\em  regret\/}, which is defined as: 
\begin{equation}\label{eq:simple}
    S_N = f^*-f(\bar{x}_{N}).
\end{equation}
We are interested in algorithms that have two key properties:
(I) The algorithm should be {\em adaptive\/} in the sense that the $(i+1)^{\mbox{st}}$ sample $x_{i+1}$ should depend on the previous samples and outputs; and  
(II) The algorithm should not rely on detailed knowledge of $f$ but {\em only} on the sampled noisy outputs. These algorithms are called {\em black-box} or {\em zeroth-order} algorithms. In order to derive rigorous bounds on the regret, however, we will need to make some light assumptions on the smoothness of $f$ (see Assumption~\ref{Ass:HOO}) and on the relationship between $f(x_i)$ and the corresponding observation $\observ_i$. 
Assumption~\ref{ass:observe} formalizes the latter by stating that $y_i$ is distributed according to some (possibly unknown) distribution with mean $f(x_i)$ and a strong tail-decay property. 
\begin{assumption}
\label{ass:observe}
There exists a constant $\sigma >0$ such that for each sampled $x_i$, the corresponding  observation $\observ_{i}$  is distributed according to a $\sigma^2$-sub-Gaussian distribution $M_{x_i}$ satisfying $\int u dM_{x_i}(u) = f(x_i)$.
\end{assumption}
% Note, A real valued random variable $X$ is called {\em $\sigma^2$-sub-Gaussian\/}, if for all $s \in \mathbb{R}$, $\mathbb{E}\left[\exp(s(X-\mathbb{E}X))\right] \leq \exp(\sigma^2 s^2/2)$ holds.

In the next section, we present the HOO-MB algorithm.
In Section~\ref{sec:regret-mfhoo} we present its analysis leading to the regret bound. In Section~\ref{sec:smc-mfhoo} we discuss how HOO-MB can be used for solving the verification problem of Equations~(\ref{eq:SMCproblem}) and~(\ref{eq:param_problem}). In Section~\ref{sec:discuss}, we discuss the choice of the  {\em batch size\/} parameter $\batch$ and smoothness parameters as well as their implications.  

\subsection{Hierarchical tree optimization with mini-batches}
\label{sec:hierarchical-mb}
 HOO-MB (Algorithm~\ref{Al:HOO_batch}) selects the next sample $x_{j+1}$ by building a binary tree in which each height (or level) partitions  the state space $\X$ into a number of regions. The algorithm samples states to estimate  upper-bounds on $f(x)$ over a region, and based on this estimate, decides to expand certain branches (i.e., re-partition certain regions). The partitioning of nodes presents a tension between (a)  sampling the same state $x_i$ multiple times to reduce the variance in the estimate of $f(x_i)$ obtained from the noisy observations $y_i$, and (b) sampling different states to reduce the region sizes based on the smoothness of $f$. HOO-MB allows us to tune this choice using the batch size parameter $\batch$. In Section~\ref{sec:discuss} and \ref{sec:eval-mfhoo}, we will discuss the implications of this choice.
 
First, we discuss the tree data-structure. Each node in the  tree is labeled by a pair of integers $(h, i)$, where $h\geq 0$ is the height, and $i \in  \{1,\ldots, 2^h\}$, is its position within level $h$. The root is labeled $(0,1)$. Each node  $(h,i)$ can have two children $(h+1, 2i-1)$ and $(h+1, 2i)$. 
Node  $(h, i)$ is associated with the region $\partition{h}{i} \subseteq \X$, where $\partition{h}{i}=\partition{h+1}{2i-1}\cup \partition{h+1}{2i}$, and for each $h$ these disjoint regions satisfy $\cup_{i=1}^{2^h} \partition{h}{i} = \X$. Thus, larger values of $h$ represent finer partitions of $\X$.

For each node $(h,i)$ in the tree, HOO-MB computes  the following quantities:
\begin{inparaenum}[(i)]
\item $t_{h,i}$ is the number of times the node is chosen or considered for re-partitioning.
\item $\counter{h}{i}$ is the number of times the node is sampled. For batch size $\batch$, every time the node $(h,i)$ is chosen, it is sampled $\batch$ times.
\item $\hat{f}_{h,i}$ is the empirical mean of observations over points sampled in $\partition{h}{i}$. 
\item  $U_{h,i}$ is an initial estimate of the upper-bound of $f$ over $\partition{h}{i}$ based on the smoothness parameters. 
\item $B_{h,i}$ is a tighter and optimistic upper bound for the same.
\end{inparaenum}
The $\mathit{tree}$ starts  with a single root  $(0,1)$, with $B$-values of its two children $B_ {1,1}$ and $B_{1,2}$ initialized to $+\infty$. At each iteration a $\mathit{path}$ from the root to a leaf is found by traversing the child with the higher $B$-value (with ties broken arbitrarily), then a new node $(\mathit{hnew},\mathit{inew})$ is added and all of the above quantities are updated. The partitioning continues until the sampling budget $N$ is exhausted. Once the sampling budget $N$ is exhausted, a leaf with maximum $B$-value at the maximum depth $h_N$ is returned. The details are provided in Algorithm~\ref{Al:HOO_batch}. 
 
%  Each iteration of the \textbf{while} loop (line~\ref{ln:b_outerwhile}), adds a new node $(\mathit{hnew},\mathit{inew})$  and updates all of the above quantities for several nodes. First, a $\mathit{path}$ from the root to a leaf is found by traversing the child with the higher $B$-value (with ties broken arbitrarily). Let the child with the higher $B$-value of the traversed leaf  be $(\mathit{hnew},\mathit{inew})$. An arbitrary point $x$ in the cell of this node $\partition{\mathit{hnew}}{\mathit{inew}}$ is chosen (line~\ref{ln:b_choose}). Then, this point is simulated in a batch for $\batch$ times  and the observations $\observ_1, \observ_2, ..., \observ_b$ are received (line~\ref{ln:b_recieve}). Next, the $\mathit{tree}$ is extended by inserting $(\mathit{hnew},\mathit{inew})$ in the $\mathit{tree}$ (line~\ref{ln:b_insert}) and for all the nodes $(h,i)$ in $\mathit{path}$ including $(\mathit{hnew},\mathit{inew})$, the $t_{h,i}$, the $\counter{h}{i}$, and the empirical mean $\hat{f}_{h,i}$ are updated (line~\ref{ln:b_update-path}). Finally, in line~\ref{ln:b_update-tree}, for all nodes $(h,i)$ in $\mathit{tree}$, $U_{h,i}$  and $B_{h,i}$ are updated using the smoothness parameters $\nu>0$ and $\rho\in(0,1)$ which are discussed below. Once the sampling budget $N$ is exhausted, a leaf with maximum $B$-value at the maximum depth $h_N$ is returned.
 
 \begin{algorithm}[ht!]
\caption{\small HOO-MB with parameters: sampling budget $N$, noise parameter $\sigma$, smoothness parameters $\nu>0$, $\rho\in(0,1)$, batch size $\batch$.}
\label{Al:HOO_batch}
% \begin{multicols}{2}
\begin{algorithmic}[1]
\State $\mathit{tree} =\{(0,1)\}$, $B_{1,1}=B_{1,2}=\infty$
	\While{$n <= N$} $\label{ln:b_outerwhile}$
        \State $(\mathit{path}, (\mathit{hnew},\mathit{inew}))$  $\leftarrow \mathit{Travrse}(\mathit{tree})$ $\label{ln:b_path}$ 
        \State $\textbf{choose}$
        $x\in\mathcal{P}_{H,I}$ $\label{ln:b_choose}$
        \State \begin{varwidth}[t]{\linewidth} $\textbf{query}$ $x$ and get $\batch$ observations $y_1,y_2,...,y_b$ $\label{ln:b_recieve}$
        \end{varwidth}
    	\State $\mathit{tree.Insert}((\mathit{hnew},\mathit{inew}))$ $\label{ln:b_insert}$
    	\ForAll{$(h,i)\in \mathit{path}$} $\label{ln:b_update-path}$
    	    \State $t_{h,i} \leftarrow t_{h,i} + 1$
        	\State $count_{h,i} \leftarrow count_{h,i} + \batch$
        	\State \begin{varwidth}[t]{\linewidth}
            $\hat{f}_{h,i} \leftarrow (1- \frac{\batch}{\counter{h}{i}})\hat{f}_{h,i}$  \par
            \hspace{2cm}$+ \frac{\sum_{j=1}^{\batch} \observ _j}{\counter{h}{i}}$
            \end{varwidth}
            %\State $\hat{f}_{h,i} \leftarrow (1- b/count_{h,i})\hat{f}_{h,i} + \sum_{j=1}^{b}{y_j}/count_{h,i}$ 
    	\EndFor
    	\State \begin{varwidth}[t]{\linewidth} $m \leftarrow m + 1$
    	\par
    	$n \leftarrow n + \batch$
    	\par  
    	$B_{\mathit{hnew}+1,2\mathit{inew}-1} \leftarrow +\infty$,
    	\par
    	$B_{\mathit{hnew}+1,2\mathit{inew}} \leftarrow +\infty$
    	\end{varwidth}
        \ForAll{$(h,i)\in \mathit{tree}$} leaf up:              $\label{ln:b_update-tree}$
        	\State \begin{varwidth}[t]{\linewidth}
        	$U_{h,i} \leftarrow \hat{f}_{h,i}+\sqrt{\frac{2\sigma^2\ln m}{\batch t_{h,i}}}$ $+\nu\rho^h$
        	\end{varwidth}
            \State \begin{varwidth}[t]{\linewidth}
            $B_{h,i} \leftarrow \min\{U_{h,i},$ \par \hskip\algorithmicindent \hspace{0.2cm} $\max\{B_{h+1,2i+1}, B_{h+1,2i}\}\}$
            \end{varwidth}
    	\EndFor
	\EndWhile\\
	\Return $\underset{(h,i)\in \mathit{tree}}{\rm argmax}\ B_{h,i}$ at depth $h_N$
% 	\Return $x$
	\end{algorithmic}
% 	\end{multicols}
\end{algorithm}

\subsection{Analysis of Regret Bound}
\label{sec:regret-mfhoo}

The analysis of the regret bounds for  HOO-MB follows the pattern of analysis in~\cite{bubeck2011x,sen2019noisy} and the details are given in the Appendix~\ref{appendix:proof-th} and \ref{appendix:proof-l}.

First, we define some notations: As in~\cite{bubeck2011x}, let $\Delta_{h,i}$ denote the {\em optimality gap\/} of node $(h,i)$, that is, $\Delta_{h,i} = f^* - \sup_{{x\in\mathcal{P}_{h,i}}} f(x)$. We say that a node $(h,i)$ is $\epsilon$-optimal if $\Delta_{h,i} \leq \epsilon$. A node $(h,i)$ is optimal if $\Delta_{h,i} = 0$ and it is sub-optimal if $\Delta_{h,i} > 0$. Let $(h,i^*)$ be the optimal node at depth $h$. 
We will use two  parameters $\nu$ and $\rho\in(0,1)$ to characterize the {\em smoothness} of $f$ relative to the partitions (see Assumption~\ref{Ass:HOO}). Roughly, these parameters restrict how quickly $f(x)$ can drop-off near the optimal $x^*$ within a $\partition{h}{i}$.

We define $I_h$ as in~\cite{grill2015black} as the set of all nodes at depth $h\geq 0$ that are $2\nu\rho^h$-optimal, and $\mathcal{N}_h(\epsilon)$ as the number of {\em $\epsilon$-optimal cells at depth $h$\/}, that is, the number of cells $\mathcal{P}_{h,i}$ with $\Delta_{h,i} \leq \epsilon$. That is,  $|I_h| = \mathcal{N}_h(2\nu\rho^h)$.

From the sampled estimate of $f(x_i)$ at a single point, HOO-MB attempts to estimate the maximum possible value that $f^*$ can take over $\mathcal{X}$. This is achieved by assuming that $f$ is locally smooth, i.e., there is no sudden large drop in the function around the global maximum. 
\begin{assumption}
\label{Ass:HOO}
There exist $\nu>0$ and $\rho \in (0, 1)$ such that for all $(h,i)$ satisfying $\Delta_{h,i} \leq c\nu\rho^h$ (for a constant $c \geq 0$), for all $x\in\mathcal{P}_{h,i}$ we have $f^* - f(x) \leq \max\{2c, c + 1\}\nu\rho^h$.
\end{assumption}

Here $c$ is a parameter that relates the variation of $f$ over $c\nu\rho^h$-optimal cells. For  $c=0$ it  implies that that there exist smoothness parameters such that the gap between the $f(x^*)$ and the value of $f$ over optimal cells is bound by $\nu\rho^h$; for $c=2$, it implies that there exist smoothness parameters such that over all $2\nu\rho^h$-optimal cells, the gap between the $f(x^*)$ and value of $f$ over those cells is bounded by $4\nu\rho^h$-optimal, and so on. For a sampling budget $N$ the final constructed {\em tree} has a maximum height $h_{max}$. For this $h_{\mathit{max}}$, this assumption allows function $f$ to have bounded jump around global maximum.

We now define a modified version of {\em near-optimality dimension} concept which plays an important role in the analysis of black-box optimization algorithms~\cite{Bubeck:2011,sen2019noisy,grill2015black,valko2013stochastic}. It measures the dimension of sets that are close to optimal.
\begin{definition}
\label{def:mod_near_opt}
$h_{\mathit{max}}$-bounded near-optimality dimension of $f$ with respect to  $(\nu, \rho)$ is:
$d_{m}(\nu, \rho)=\inf\{d'\in\mathbb{R}_{>0}:\exists B>0,\ \forall h \in [0,h_{max}],
\mathcal{N}_h(2\nu\rho^h) \leq B\rho^{-d'h}\}$.
\end{definition}
In other words, the $h_{\mathit{max}}$-bounded near-optimality dimension is the smallest $d_m \geq 0$ such that  the number of $2\nu\rho^h$-optimal cells at any depth $0\leq h \leq h_{max}$ is bounded by $B\rho^{-d_{m}h}$, for a constant $B$.
The number $\mathcal{N}_h(2\nu\rho^h)$ of near-optimal cells grow exponentially with $h$, and the near-optimality dimension 
gives the exponential rate of this growth. Thus,  $|I_h| = \mathcal{N}_h(2\nu\rho^h) 
\leq B\rho^{-d_{m}(\nu,\rho)h}$.

% The following lemma  bounds the expected number of times  the algorithm visits a suboptimal node $(h,i)$ at the end of round $n$, in terms of the  smoothness parameters $(\nu,\rho)$, batch size $\batch$, and the sub-optimality gap $\Delta_{h,i}$. This bound is  used in the regret bound for HOO-MB. 

% \begin{lemma}
% \label{l:4}
% For any  $n > \batch +1$, and any sub-optimal node $(h,i)$ with $\Delta_{h,i}>\nu\rho^h$:
% \begin{align*}
%     \mathbb{E}[t_{h,i}(n)] \leq \dfrac{8\sigma^2\log{(\lfloor \dfrac{n-1}{\batch}\rfloor +1)}}{\batch(\Delta_{h,i}-\nu\rho^h)^2}+4.
% \end{align*}
% \end{lemma}
% A sub-optimal node $(h,i)$ might be visited if either the $U_{h,i}$ is greater than $f^*$ or the $U_{h,i*}$ is underestimated. 
% This lemma combines these two cases. 

We are now ready to sketch regret bound for HOO-MB. 

\begin{theorem}
\label{Th:regret_HOO_batch}
With the input parameters satisfying Assumptions~\ref{ass:observe} and~\ref{Ass:HOO} and a sampling budget of $N$, HOO-MB achieves a regret bound of 
\begin{align*}
	\mathbb{E}[S_N] = O\bigg( \big(\dfrac{B \log \big( \lfloor \dfrac{N-1}{\batch} \rfloor + 1 \big)  +\batch}{N}\big)^{\frac{1}{d_{m}+2}}\bigg),
\end{align*}
where $d_{m}=d_{m}(\nu,\rho)$ is the near-optimality dimension and $B$ is the constant appearing in Definition~\ref{def:mod_near_opt}.
\end{theorem}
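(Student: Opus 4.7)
The plan follows the regret analysis of HOO from Bubeck et al.\ and its refinement by Sen et al., but carefully bookkeeps the batch size $\batch$. Let $m = \lfloor (N-1)/\batch \rfloor + 1$ denote the total number of outer-loop iterations: each iteration adds one node to the tree and consumes $\batch$ samples of the budget $N$, so the analysis is easiest expressed in $m$ and converted back to $N$ at the end using $\batch m \ge N - \batch + 1$. Throughout, I condition on a favourable event that pools per-node concentration statements.

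The first step is concentration plus optimism. Each selection of node $(h,i)$ draws $\batch$ i.i.d.\ $\sigma^{2}$-sub-Gaussian observations with mean $f(x)$ for the sampled $x\in\mathcal{P}_{h,i}$, so after $t_{h,i}$ selections the empirical mean $\hat{f}_{h,i}$ averages $\batch\, t_{h,i}$ independent samples. Hoeffding's inequality for sub-Gaussian variables yields
\begin{equation*}
\Pr\!\left(\,|\hat{f}_{h,i}-\bar{f}_{h,i}| > \sqrt{2\sigma^{2}\ln m/(\batch\, t_{h,i})}\,\right) \le 2m^{-4},
\end{equation*}
where $\bar{f}_{h,i}$ is the expected reward of the sampling strategy restricted to $\mathcal{P}_{h,i}$. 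A union bound over the at most $2m$ ever-inserted nodes and $m$ rounds preserves this simultaneously with probability $1-O(m^{-2})$. Combined with Assumption~\ref{Ass:HOO}, which bounds the oscillation of $f$ over $\mathcal{P}_{h,i}$ by $\nu\rho^{h}$ on near-optimal cells, the quantity $U_{h,i} = \hat{f}_{h,i} + \sqrt{2\sigma^{2}\ln m/(\batch\, t_{h,i})} + \nu\rho^{h}$ upper-bounds $\sup_{x\in\mathcal{P}_{h,i}} f(x)$. By upward induction along the \emph{leaf-up} update $B_{h,i} = \min\{U_{h,i},\, \max(B_{h+1,2i-1},B_{h+1,2i})\}$, one obtains the optimism invariant $B_{h,i^{\ast}}\ge f^{*}$ along every optimal branch.

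The second step bounds the number of selections of suboptimal nodes. A standard UCB argument shows that any node $(h,i)$ with gap $\Delta_{h,i}>2\nu\rho^{h}$ cannot be selected more than $O\!\left(\ln m /(\batch\,\Delta_{h,i}^{2})\right)$ times before its $U$-value is permanently dominated by an optimal descendant of the root. I then split the tree at a threshold depth $H$. Above depth $H$ the selections are confined to near-optimal cells, whose cardinality is bounded by $|I_{h}|\le B\rho^{-d_{m}h}$ from Definition~\ref{def:mod_near_opt}, with each selection contributing instantaneous regret at most $2\nu\rho^{h}$ by Assumption~\ref{Ass:HOO}. Below depth $H$, each suboptimal sibling of a near-optimal node contributes at most $O(\ln m/\batch)$ selections. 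Summing the resulting geometric series in $h$ yields a per-iteration cumulative regret
\begin{equation*}
R_{m} \;\le\; c_{1}\,\nu\rho^{H}\,m \;+\; c_{2}\,B\,\rho^{-(d_{m}+1)H}\,\ln m /\batch \;+\; O(1).
\end{equation*}

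Balancing $H$ so that the two main terms match gives $\rho^{H(d_{m}+2)} \asymp B\ln m/(\batch\, m)$, hence $R_{m}/m = O\!\left((B\ln m/(\batch\, m))^{1/(d_{m}+2)}\right)$. The final step converts this into a simple-regret bound on the leaf returned by HOO-MB, namely $\operatorname{argmax}_{(h,i)} B_{h,i}$ at depth $h_{N}$: since the leaf-up propagation makes this branch the one with the most accumulated evidence of near-optimality, its simple regret is at most $R_{m}/m$ up to constants. Substituting $\batch m \ge N-\batch+1$ and absorbing the resulting $O(\batch/N)$ slack from the possibly incomplete last batch produces the stated form $\bigl((B\log(\lfloor (N-1)/\batch\rfloor+1)+\batch)/N\bigr)^{1/(d_{m}+2)}$. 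The main technical obstacle will be the bookkeeping around batching: verifying that the union bound over the growing tree together with the $m$ rounds still yields a $\log m$ bonus that couples correctly with the $1/\batch$ variance reduction, and that the monotonicity used in the UCB elimination step is preserved by the batch-averaged empirical-mean update in Algorithm~\ref{Al:HOO_batch}.
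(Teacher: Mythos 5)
Your overall strategy is the same as the paper's: decompose the tree at a threshold depth $H$ into (i) descendants of the $2\nu\rho^H$-optimal nodes at depth $H$, each contributing instantaneous regret $O(\nu\rho^H)$, (ii) near-optimal nodes at depths $h<H$, counted via $|I_h|\le B\rho^{-d_{m}h}$, and (iii) suboptimal children of near-optimal nodes, whose visit counts are controlled by a batched UCB concentration lemma of the form $\mathbb{E}[t_{h,i}(n)]\le \frac{8\sigma^2\log(\lfloor(n-1)/\batch\rfloor+1)}{\batch(\Delta_{h,i}-\nu\rho^h)^2}+O(1)$; then optimize $H$ and pass from cumulative to simple regret by the standard $\mathbb{E}[S_N]\le\mathbb{E}[R_N]/N$ argument. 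All of this matches the paper's Appendices A and B, up to your working in the batch index $m$ rather than the sample index $N$.

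The one substantive slip is the accounting of the additive $\batch$ in the numerator. Your intermediate bound $R_{m}\le c_1\nu\rho^H m+c_2 B\rho^{-(d_{m}+1)H}\ln m/\batch+O(1)$, balanced as you describe, yields $\big(B\ln m/(\batch m)\big)^{1/(d_{m}+2)}\approx\big(B\log(\cdot)/N\big)^{1/(d_{m}+2)}$ --- i.e.\ without the $+\batch$ --- and attributing the missing piece to the ``possibly incomplete last batch'' only buys an additive $O(\batch/N)$ outside the exponent, which is not equivalent to $\big((B\log(\cdot)+\batch)/N\big)^{1/(d_{m}+2)}$ when $\batch$ is large (take $\batch=\sqrt N$ and $d_{m}=0$: the theorem gives $N^{-1/4}$, your expression gives roughly $(\log N/N)^{1/2}$). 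In the paper the $+\batch$ arises because every node of $\mathcal{T}^2$ is selected at least once and every node of $\mathcal{T}^3$ is selected $O(1)$ times beyond the logarithmic term, and each such selection costs $\batch$ samples of instantaneous regret; summed over the $O(B\rho^{-d_{m}H})$ such nodes this contributes $O(\batch\, B\,\rho^{-(d_{m}+1)H})$ to the sample-level cumulative regret, a term that must participate in the balancing over $H$ alongside the logarithmic one. Your ``$+O(1)$'' swallows exactly this contribution. With that term restored, the rest of your argument goes through as in the paper.
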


\begin{proof}(sketch)
Let $x_i \in \mathcal{X}$ be the point returned by HOO-MB at round $i$. %
Let $R_N = \sum_{i=1}^N (f^*-f(x_i))$ be the cumulative regret at round $N$.
Let $\mathcal{T}$ be the $\textit{tree}$ constructed by HOO-MB at the end of $N$ iterations. Let us fix an arbitrary height $H>0$ and based on $H$ we partition $\mathcal{T}$ into three sub-trees (See Figure~\ref{fig:tree}): 
$\mathcal{T}_1$ contains the nodes that are $2\nu\rho^h$-optimal at depth $h \geq H$; 
$\mathcal{T}_2$ contains the nodes that are $2\nu\rho^h$-optimal at depth $h < H$; and $\mathcal{T}_3$ has all other nodes.
For a node that is not $2\nu\rho^h$-optimal at depth $h$, one of its ancestors at some depth $h_a < h$ is $2\nu\rho^{h_a}$-optimal. The sub-tree $\mathcal{T}_3$ includes all such nodes, i.e. it includes the descendants of any node at depth $h$ that is not $2\nu\rho^h$-optimal but its parent is $2\nu\rho^{h-1}$-optimal. 

\begin{figure}
    \centering
    \includegraphics[width=1\linewidth]{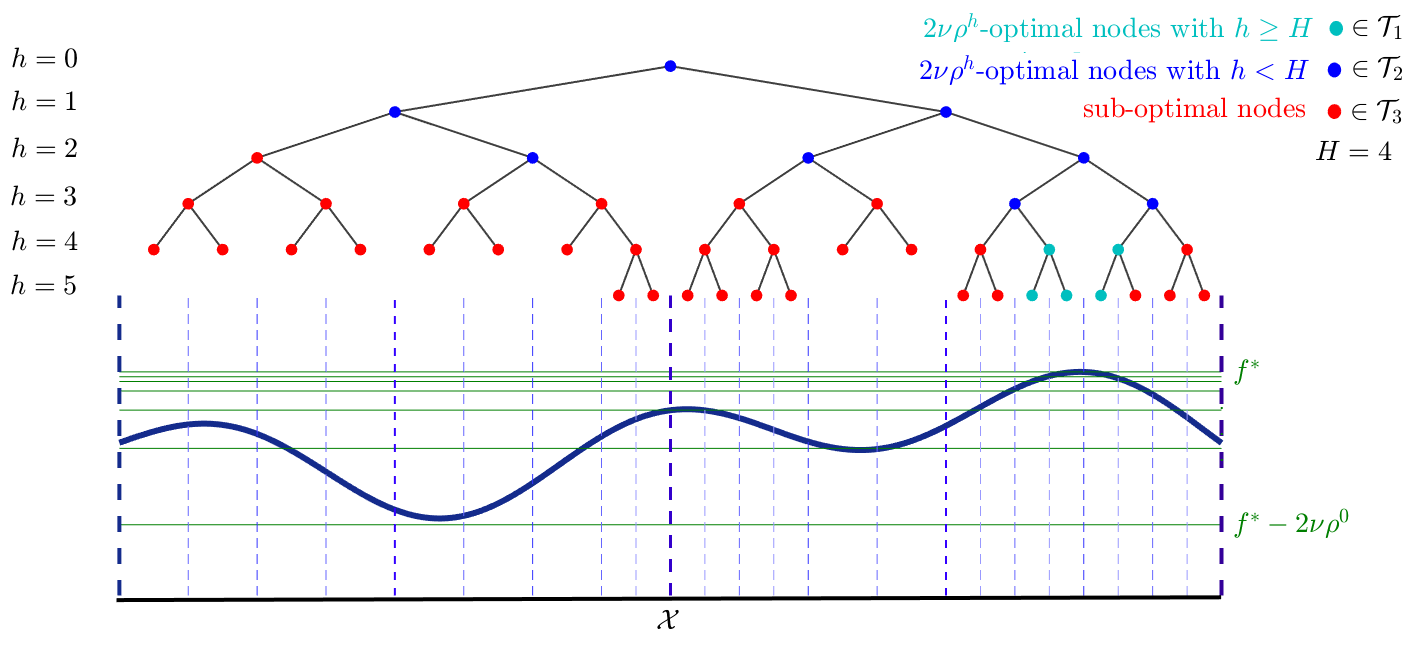}
    \caption{\small As in HOO \cite{munos:hal-2014,bubeck2011x}, the \textit{tree} $\mathcal{T}$ constructed by HOO-MB and its decomposition into sub-trees with the parameter $H=4$. Vertical (dashed) lines represent the $\partition{h}{i}$ boundaries constructed by $\mathcal{T}$. Cyan ($\mathcal{T}_1$) and blue ($\mathcal{T}_2$) nodes are $2\nu\rho^h$-optimal at $0 \leq h \leq 5$ as marked by the horizontal lines. 
    The red nodes are sub-optimal and belong to $\mathcal{T}_3$.}
    \label{fig:tree}
    % \vspace{-0.5cm}
\end{figure}

Let $R_{N,1}$, $R_{N,2}$ and $R_{N,3}$ be the  cumulative regrets for the nodes that belong to the sub-trees $\mathcal{T}_1$, $\mathcal{T}_2$ and $\mathcal{T}_3$, respectively.
The sub-tree $\mathcal{T}_1$ contains the nodes $(h,i)$ that are at least $2\nu\rho^H$-optimal. By  Assumption~\ref{Ass:HOO}, we  conclude that all the points $x\in\partition{h}{i}$ satisfy $f^* - f(x) \leq 4\nu\rho^H$ and summing over $\mathcal{T}_1$ using $|\mathcal{T}_1|\leq N$, we get  $\mathbb{E}[R_{N,1}] \leq 4\nu\rho^HN$.
    
The sub-tree $\mathcal{T}_2$ contains the nodes $(h,i)$ that are $2\nu\rho^h$-optimal. Again, using  Assumption~\ref{Ass:HOO} we can conclude that all the points $x\in\partition{h}{i}$ satisfy $f^* - f(x) \leq 4\nu\rho^h$. In addition, since $\mathcal{T}_2$ is the union of sets $I_h$ for $h=0,\dots,H-1$, using the Definition~\ref{def:mod_near_opt}, the size of sub-tree $\mathcal{T}_2$ can be bounded. Combining these we can get $\mathbb{E}[R_{N,2}] \leq \batch \sum_{h=0}^{H-1} 4\nu\rho^{h}|I_h| \leq 4\batch \nu B \sum_{h=0}^{H-1} \rho^{h(1-d_{m})}$.

 The sub-tree $\mathcal{T}_3$ contains sub-optimal nodes and we can upper bound the expected number of visits to sub-optimal nodes using the modified form of Lemma $14$ from~\cite{bubeck2011x} 
 %Lemma~\ref{l:4}
 %
 and get $$\mathbb{E}[R_{N,3}] \leq 8\batch\nu B \sum_{h=1}^{H} \rho^{(h-1)(1-d_{m})} \big( \dfrac{8\sigma^2\log{(\lfloor \dfrac{N-1}{\batch} \rfloor+1)}}{\batch\nu^2\rho^{2h}} + 4 \big),$$ where the details can be found in Appendix~\ref{appendix:proof-l}. 
   
Combining the three upper bounds for $R_{N,1}$, $R_{N,2}$ and $R_{N,3}$ we get an expression of an upper bound on $\mathbb{E}[R_N]$ involving $H$. We minimize this  over $H$ to get $\mathbb{E}[R_N]$ as $O\bigg( BN^{\frac{d_{m}+1}{d_{m}+2}}\ \bigg(\log (\lfloor \dfrac{N-1}{\batch} \rfloor + 1 ) + \batch \bigg)^{\frac{1}{d_{m}+2}}\bigg)$.
Note that  if one of the points sampled by the algorithm were chosen uniformly at random as the final output, then,  $\mathbb{E}(S_N) \leq \dfrac{\mathbb{E}[R_N]}{N}$~\cite{bubeck2011x,bubeck2011pure}. Since, at round $N$, our algorithm returns the best sampled point, this relation  also applies for our algorithm. % which concludes the proof. 
\hfill $\blacksquare$
\end{proof}

From Theorem~\ref{Th:regret_HOO_batch}, we observe that the regret is minimized if the $h_{max}$-bounded near-optimality dimension $d_{m}(\nu, \rho)$ is minimized. If the  smoothness parameters $(\nu,\rho)$ for the function $f$ that minimize the near-optimlaity dimension $d_{m}$ are known, then \toolname{} achieves this minimum regret. In general, we believe that inferring the minimizing smoothness parameters for a given verification problem will be challenging and requires further investigations. However, if bounds on these smoothness parameters are known---which is often the case for physical processes---then the search for the optimal parameters can be parallelized with similar regret bounds as given by Theorem~\ref{Th:regret_HOO_batch}~\cite{grill2015black}. 
The pseudocode for this parallel search is given in~\cite{grill2015black}.
%Section~\ref{POO}. 

\subsection{Verification and Parameter Tuning with HOO-MB}
\label{sec:smc-mfhoo}
\paragraph{Verification}
In order to use HOO-MB for verification, a natural choice for the objective function  would be to define $f(x):= \phit{k}{\Unsafe}{x}$ for any initial state $x \in \Theta$. Evaluating this function $\phit{k}{\Unsafe}{x}$ exactly is infeasible when the transition kernel $\mathbb{P}_{\beta}$ is unknown. Even if $\mathbb{P}_{\beta}$ is known, calculating $\phit{k}{\Unsafe}{x}$ involves integral over the state space (as in~(\ref{eq:prob_int})). 
Instead, we take advantage of the fact that HOO-MB can work with noisy observations. For any initial state $x \in \Theta$, and an execution $\alpha$ starting from $x$ we define the observation:
\begin{align}
\Observ = 1 \ \mbox{if} \ \alpha\ \mbox{hits}\ \Unsafe\ \mbox{within}\ k\ \mbox{steps}, \ \mbox{and}\ 0 \ \mbox{otherwise}.
\end{align}
Thus, given an initial state $x$, 
$\Observ=1$ with probability $\phit{k}{\Unsafe}{x}$, and $\Observ=0$ with probability $1-\phit{k}{\Unsafe}{x}$. That is, $\Observ$ is a Bernoulli random variable with mean $\phit{k}{\Unsafe}{x}$. In HOO-MB, once an initial state $x \in \partition{\mathit{hnew}}{\mathit{inew}}$ is chosen (line~\ref{ln:b_choose}), we simulate $\M$ upto $k$ steps $\batch$ times starting from $x$  and calculate the empirical mean of $Y$, which serves as the noisy observation $y$.

\paragraph{Parameter Synthesis}
In order to use HOO-MB for parameter synthesis, a natural choice for the objective function would be $f(\beta):= \mathbb{E}[r(\alpha,\beta)|\beta]$. Evaluating this function exactly is infeasible when the transition kernel $\mathbb{P}_{\beta}$ is unknown. 
Instead, we take advantage of the fact that HOO-MB can work with noisy observations. For a parameter $\beta \in \mathcal{B}$, and an execution $\alpha$ starting from $x_0$ sampled from a given distribution we receive an observation $\Observ$. The observation corresponding to this simulation is $\Observ=r(\alpha,\beta)$, with mean $\mathbb{E}[r(\alpha,\beta)|\beta]$. In HOO-MB, once an parameter $\beta \in \partition{\mathit{hnew}}{\mathit{inew}}$ is chosen (line~\ref{ln:b_choose}), we repeat the above simulation $\batch$ times and calculate the empirical mean of $\Observ$.\\

% The noise parameter $\sigma$ has to be specified according to Assumption~\ref{ass:observe}. That is, $\sigma$ should be such that each observation $\Observ$ is distributed as a  $\sigma^2$-sub-Gaussian. As the reward $\Observ$ of each initial state (arm in the bandits language)  is a Bernoulli random variable, we have that $\Observ$ is sub-Gaussian with parameter $\sigma$ = 0.5. Thus, choosing $\sigma = 0.5$ is a valid parameter for any instances of this optimization problem, independent of the actual underlying model.
Assume each observation $\Observ$ satisfy conditions of  Assumption~\ref{ass:observe}. Then we have the following proposition. 
\begin{proposition}
\label{opt-gap}
Given smoothness parameters $\rho$ and $\nu$ satisfying Assumption~\ref{Ass:HOO} for the function $f(x) := p_{k,\Unsafe}(x)$ or $f(\beta) := \mathbb{E}[r(\alpha, \beta)|\beta]$, 
if Algorithm~\ref{Al:HOO_batch} returns  $\bar{x}_N \in \Theta$ or $\mathcal{B}$, then
$\phit{k}{\Unsafe}{x^*} - \phit{k}{\Unsafe}{\bar{x}_N}$ or $\mathbb{E}[r(\alpha,\beta)|\beta^*]-\mathbb{E}[r(\alpha,\beta)| \bar{x}_N]$, is upper bounded by Theorem~\ref{Th:regret_HOO_batch}.
\end{proposition}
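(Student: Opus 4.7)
The proof is essentially a direct reduction of each of the two problems to the abstract black-box optimization setting covered by Theorem~\ref{Th:regret_HOO_batch}. So the plan is: identify the candidate objective function $f$, verify that Assumptions~\ref{ass:observe} and~\ref{Ass:HOO} hold for that $f$ and the noisy observations HOO-MB receives, and then invoke Theorem~\ref{Th:regret_HOO_batch} to obtain the regret bound. Finally, translate the bound on $\mathbb{E}[S_N] = f^* - f(\bar{x}_N)$ back to the language of the two original problems.

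First I would fix the objective function in each case: for verification set $f(x) := \phit{k}{\Unsafe}{x}$ on $\Theta$, and for parameter synthesis set $f(\beta) := \mathbb{E}[r(\alpha,\beta)\mid \beta]$ on $\mathcal{B}$. Assumption~\ref{Ass:HOO} holds for these $f$ by the hypothesis of the proposition, so the smoothness side is already taken care of.

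Next, I would check Assumption~\ref{ass:observe}, i.e., that the observation returned when HOO-MB samples a point has mean $f(x)$ and is $\sigma^2$-sub-Gaussian for some fixed $\sigma>0$. In the verification case, the observation $\Observ$ returned by a single simulation is a Bernoulli random variable indicating whether the sampled execution hits $\Unsafe$ within $k$ steps; by construction its mean is $\phit{k}{\Unsafe}{x} = f(x)$, and because $\Observ \in [0,1]$ it is $\tfrac{1}{4}$-sub-Gaussian by Hoeffding's lemma. Hence $\sigma^2 = 1/4$ works uniformly over $x$, so Assumption~\ref{ass:observe} is fulfilled. In the parameter synthesis case, $\Observ = r(\alpha,\beta)$ has mean $\mathbb{E}[r(\alpha,\beta)\mid \beta] = f(\beta)$ by definition, and the sub-Gaussianity with some uniform $\sigma$ is exactly what is assumed in the hypothesis preceding the proposition. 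The batch-averaging step at line~\ref{ln:b_recieve} of Algorithm~\ref{Al:HOO_batch} only preserves this property (an average of $\batch$ independent $\sigma^2$-sub-Gaussian samples is $(\sigma^2/\batch)$-sub-Gaussian), which is already accounted for in the analysis underlying Theorem~\ref{Th:regret_HOO_batch}.

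With both assumptions verified, I would invoke Theorem~\ref{Th:regret_HOO_batch} directly: the point $\bar{x}_N$ returned by HOO-MB satisfies
\begin{equation*}
\mathbb{E}[f^* - f(\bar{x}_N)] \;=\; O\!\left(\left(\tfrac{B\log(\lfloor (N-1)/\batch\rfloor+1)+\batch}{N}\right)^{\!\frac{1}{d_m+2}}\right).
\end{equation*}
Substituting the two choices of $f$ gives the two claimed bounds: $\phit{k}{\Unsafe}{x^*} - \phit{k}{\Unsafe}{\bar{x}_N}$ in the verification setting, and $\mathbb{E}[r(\alpha,\beta)\mid\beta^*] - \mathbb{E}[r(\alpha,\beta)\mid\bar{x}_N]$ in the parameter synthesis setting. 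I do not expect any serious obstacle here; the only point that requires a bit of care is the verification case, where one must explicitly note that a $[0,1]$-valued Bernoulli observation is sub-Gaussian so that Assumption~\ref{ass:observe} applies without any further assumption on the user-supplied simulator.
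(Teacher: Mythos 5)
Your proposal is correct and matches the paper's (implicit) argument: the paper states the proposition as a direct consequence of Theorem~\ref{Th:regret_HOO_batch} once Assumptions~\ref{ass:observe} and~\ref{Ass:HOO} are in force, which is exactly the reduction you carry out. Your extra observation that the Bernoulli observation is $\tfrac{1}{4}$-sub-Gaussian is a welcome detail the paper leaves implicit (and is consistent with the tool's choice of $\sigma = 0.5$), but it does not change the route.
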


\subsection{Discussions on choice of  parameter values}
\label{sec:discuss}
\textbf{Batch size $\batch$:}
The main difference between HOO-MB and the original HOO~\cite{bubeck2011x} is that each node $(h,i)$ in the HOO-MB is sampled $\batch$ times. In other words once a node (arm) is chosen, instead of a single observation, $\batch \geq 1$ observations are received. This in turn, required the update rules for $m, U_{h,i},$ and $B_{h,i}$ to be generalized. Indeed, by setting $\batch=1$ in HOO-MB we recover the original HOO algorithm and the corresponding simple regret bound  $\mathbb{E}[S_N] = O\bigg( (\frac{B\log N}{N})^{\frac{1}{d+2}}\bigg)$.
Comparing the regret bound in HOO-MB and HOO we observe that HOO-MB gets worse in terms of regret bound, however,  the number of nodes in the tree is reduced by a factor of $\batch$. This reduces the running time and makes HOO-MB more efficient in terms of memory usage with respect to the HOO algorithm. Experiments in Section~\ref{sec:performance} show that the actual regret of the algorithm doesn't increase much when using some reasonable batch sizes (e.g. $\batch=100$).\footnote{In fact, we observed that the actual regret first decreases and then increases as the batch size increases starting from $1$. This phenomena does not contradict the theoretical regret bound we have derived, since it is only an upper bound and may not be tight in some cases. More sophisticated theory has to be built to understand this phenomena, which we leave for future work.}

\textbf{Optimal smoothness parameters $(\nu, \rho)$:}
% \paragraph{Optimal smoothness parameters $(\nu, \rho)$:}
% \label{POO}
Recall from Section~\ref{sec:prelims}, $\phit{k}{\Unsafe}{x}$ is continuous in $x \in \Theta$, if the transition kernel $\mathbb{P}$ is continuous in $x \in \Theta$. This gives us a sufficient condition for guaranteeing the existence of  smoothness parameters $\nu$ and $\rho$  satisfying  Assumption~\ref{Ass:HOO}.
However, as mentioned in Section~\ref{sec:regret-mfhoo}, the optimal smoothness parameters $(\nu,\rho)$ minimizing the near-optimality dimension $d(\nu,\rho)$ of $\phit{k}{\Unsafe}{\cdot}$ are generally not known. Finding bounds on the optimal values of these parameters based on partial knowledge of the \modelname model would be a direction for future investigations. 
The following simple algorithm adaptively searches for the optimal smoothness parameters by spawning several parallel HOO-MB instances with various $\nu$ and $\rho$ values. This is a standard approach for parameter search in the bandits literature~(see for example~ \cite{sen2019noisy,grill2015black}).
\begin{algorithm}[H]
\caption{\small Parallel search with parameters: sampling budget $N$, number of instances $K$, and maximum smoothness parameters $\nu_{max}$ and $\rho_{max}$}\label{Al:MFPOO}
\begin{algorithmic}[1]
	\For{$i=1 : K$}
    	\State Spawn HOO-MB with $(\nu = \nu_{max}, \rho = \rho_{max}^{K/(K-i+1)})$ with budget $N/K$
	\EndFor
	\State Let $\bar{x}_{i}$ be the point returned by the $i^{th}$ HOO-MB instance for $i \in \{1, .., K\}$\\
	
	\Return $\{\bar{x}_{i} | \ i = 1, .., K\}$
\end{algorithmic}
\end{algorithm}

\section{\toolname{} tool and experimental evaluation}
\label{sec:experiments}

The components of  \toolname{}  are shown in Figure~\ref{fig:HooVer_Components}. Given the the initial state and/or the parameter, \toolname{} generates random trajectories  using the transition kernel simulator and gets rewards. It runs $K$ instances of HOO-MB with automatically calculated smoothness parameters. Each instance returns an estimate $\bar{x}_i$ of the optimum, then \toolname{} computes the mean of reward for each $\bar{x}_i$ using Monte-Carlo simulations\footnote{Number of simulations used in this step is included in ``\#queries" in Fig.\ref{fig:results}.}, and outputs the $\bar{x}$ that gives the highest mean reward. Source files and instructions for reproducing the results are available from the \toolname{} web page\footnote{\url{https://www.daweisun.me/hoover}. The source code is available at \url{https://github.com/sundw2014/HooVer}}.
\begin{figure}
% \vspace{-0.5cm}
    \centering
    \includegraphics[width=1\linewidth]{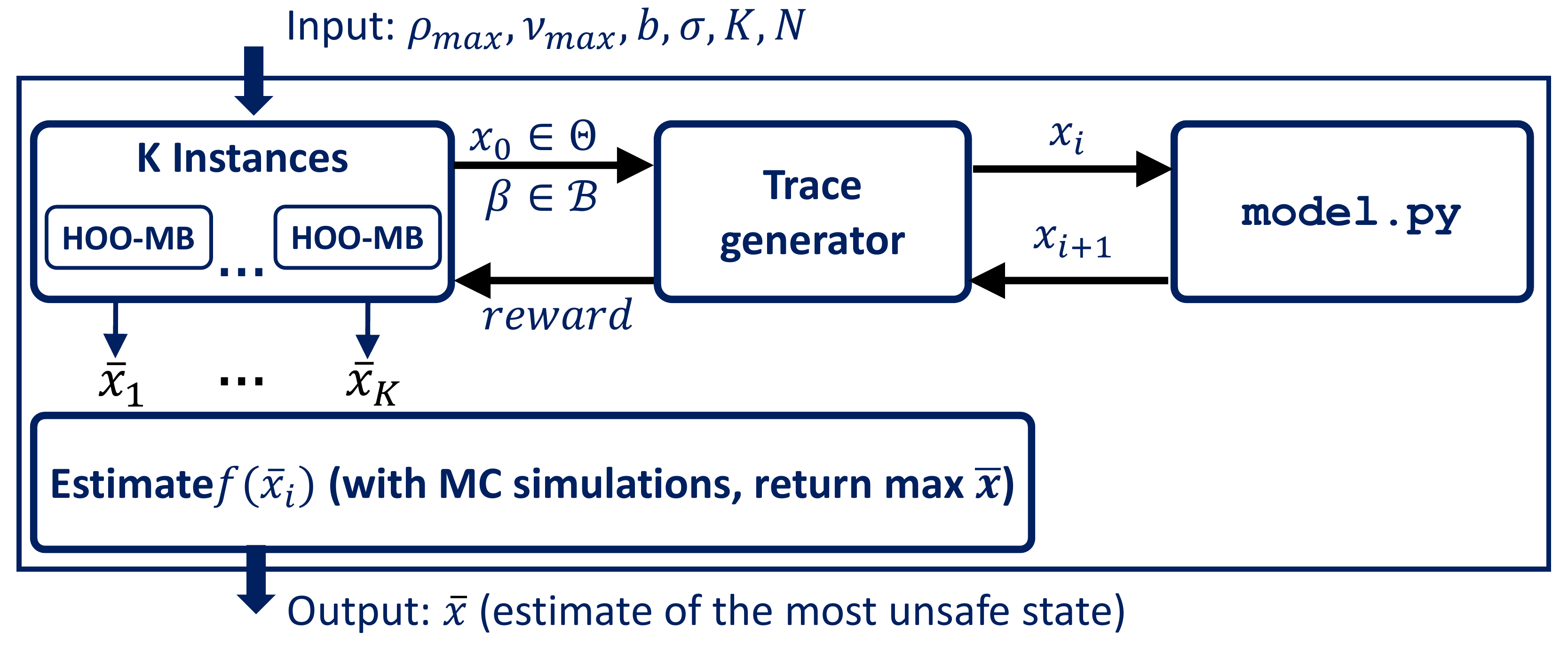}
    \caption{\small  \toolname{} tool. The parameters: $\rho_{max},\nu_{max}$ are used to calculate smoothness parameters. We fix $\nu_{max} = 1.0$ and results are not sensitive to  $\rho_{max}$ (Section~\ref{sec:performance}). The impact of batch size $\batch$ is discussed in Section~\ref{sec:performance}.
    The noise parameter $\sigma = 0.5$ is a valid choice for all models. The number of HOO-MB instances $K$ is fixed to $4$.
    }
    \label{fig:HooVer_Components}
% \vspace{-1cm}
\end{figure}

% \vspace{-0.5cm}
\subsection{Benchmarks}
\label{sec:allbenchmarks}
% \vspace{-0.1cm}
% As we saw in  Section~\ref{ex:simple}, a \toolname{} user can define an custom problem by simply writing a Python class. Using this standard programming language provides the model creator with significant expressive power, and reduces the burden of learning a new language. In contrast, for example, the  modeling language for the leading model checking tool PRISM~\cite{HKNP06}, does not support sampling  continuous probability distributions.

% \vspace{-0.5cm}
\begin{figure}
\centering
\includegraphics[width=1\linewidth]{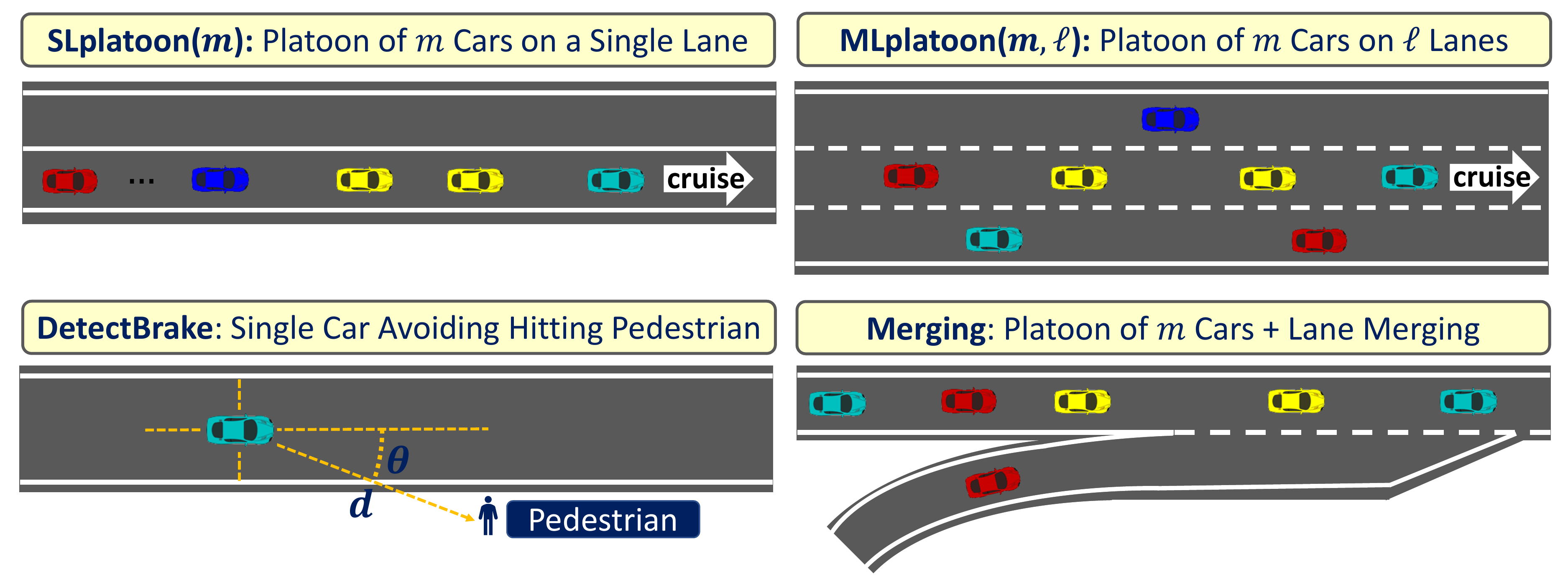}\par
\caption{\small Illustration of the verification benchmark scenarios which can be instantiated with different number of vehicles and initial conditions.}
\label{fig:scenarios}
\end{figure}

In order to evaluate the proposed method on verification tasks, we have created several example models (Figure~\ref{fig:scenarios}) that capture scenarios involving autonomous vehicles and driving assist systems. More than 25\% of all highway accidents are rear-end crashes~\cite{kodaka2003rear}. Automatic braking and collision warning systems are believed to improve safety, however, their testing and certification remain challenging~\cite{simonepresent} (see also \cite{FQM18} for discussion). Our  examples  capture typical scenarios used for certification of these control systems. 

{\sf SLplatoon\/} models $m$ cars on a single lane where each car  probabilistically decides to {\em speed up\/}, {\em cruise,\/} or {\em brake\/} based on its current gap with the predecessor.  \Mlplatoon\ is a similar model with $m \times \ell$ cars on $\ell$ lanes where cars can also choose to change lanes. \Merging\ models a car on the ramp trying to merge to $m$ cars on the left lane. \DetectingPedestrian\ models a pedestrian crossing the street in front of an approaching car which brakes only if its sensor detects it. 
In all these models, the initial state uncertainties ($\Theta$) are defined based on the localization error  (e.g., GPS error in position and velocity) and the unsafe sets ($\Unsafe$)   capture collisions. In our experiments, we use up to $18$ vehicles in $2$ lanes (in \Mlplatoon{}). These give \modelname instances with up to $18$-dimensional state spaces with an initial uncertainty $(\Theta)$ that spans $8$ continuous dimensions.
More detailed description of the  scenarios are given at the   \toolname{} webpage. 

Moreover, we evaluate the performance of \toolname{} on parameter synthesis tasks using an Linear–quadratic regulator (LQR) benchmark. Specifically, we consider the system of as $x_{t+1} = Ax_t + Bu_t + w_t$, where $w_t \in \mathcal{N}(0, \sigma^2 I)$ are independent and identically distributed (i.i.d) Gaussian noise. We want to search for a state feedback gain matrix $K$ such that $u_t = Kx_t$ minimizes the cost $J(K) = \mathbb{E}_w\left[\sum_{t=0}^{T-1} (x_t^\intercal Q x_t + u_t^\intercal R u_t) + x_T^\intercal Q x\right]$. In the experiments, we consider the case where $x_t \in \mathbb{R}^2$ and $u_t \in \mathbb{R}^2$, and $\sigma$ is set to $0.01$. The distribution for the initial state is a Dirac delta function, i.e. the initial state is fixed.

\subsection{\toolname{} performance on benchmarks}\label{sec:performance}

\begin{figure}[t]
\begin{multicols}{2}
    \includegraphics[width=1\linewidth]{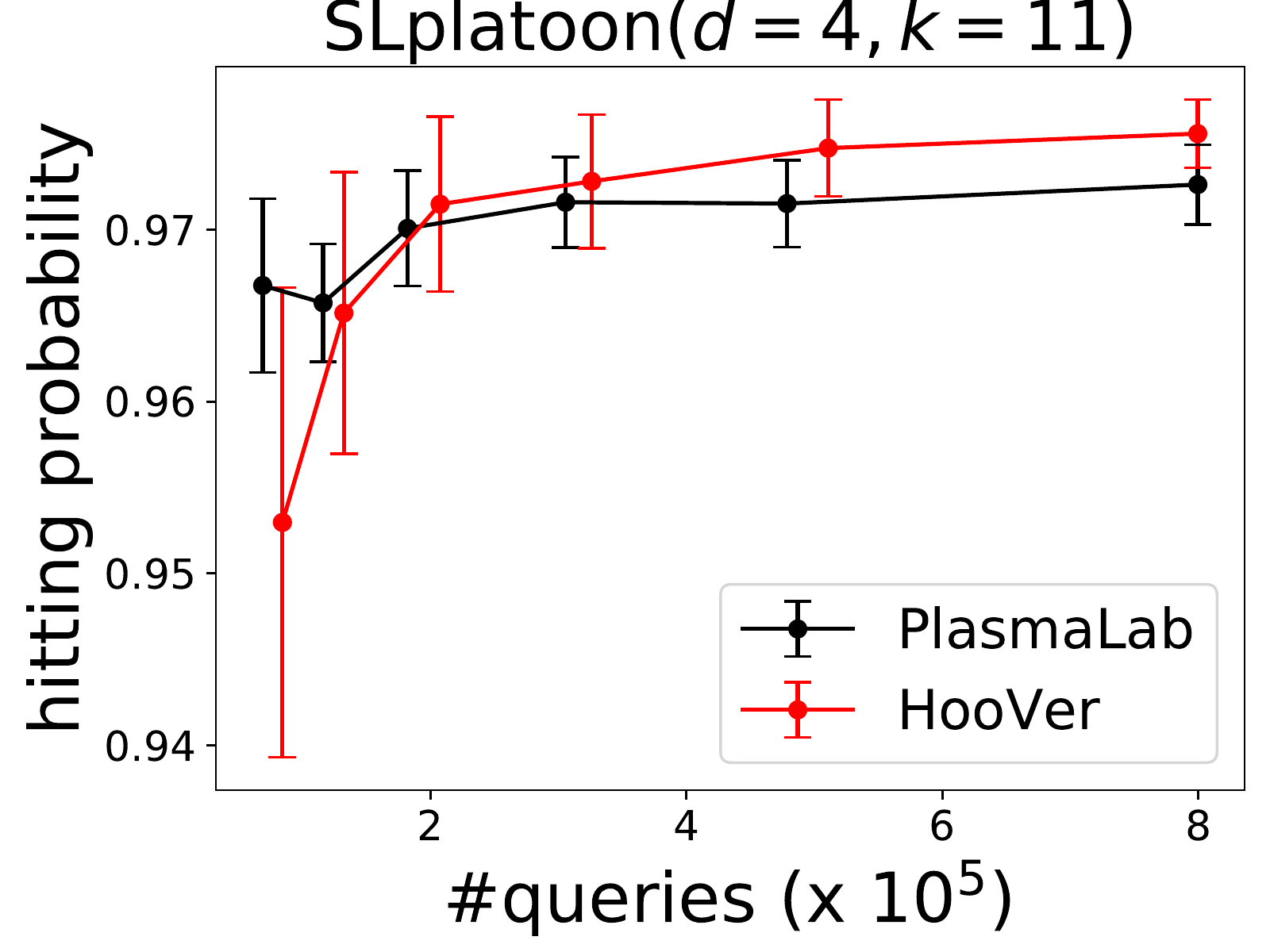}\par 
    \includegraphics[width=1\linewidth]{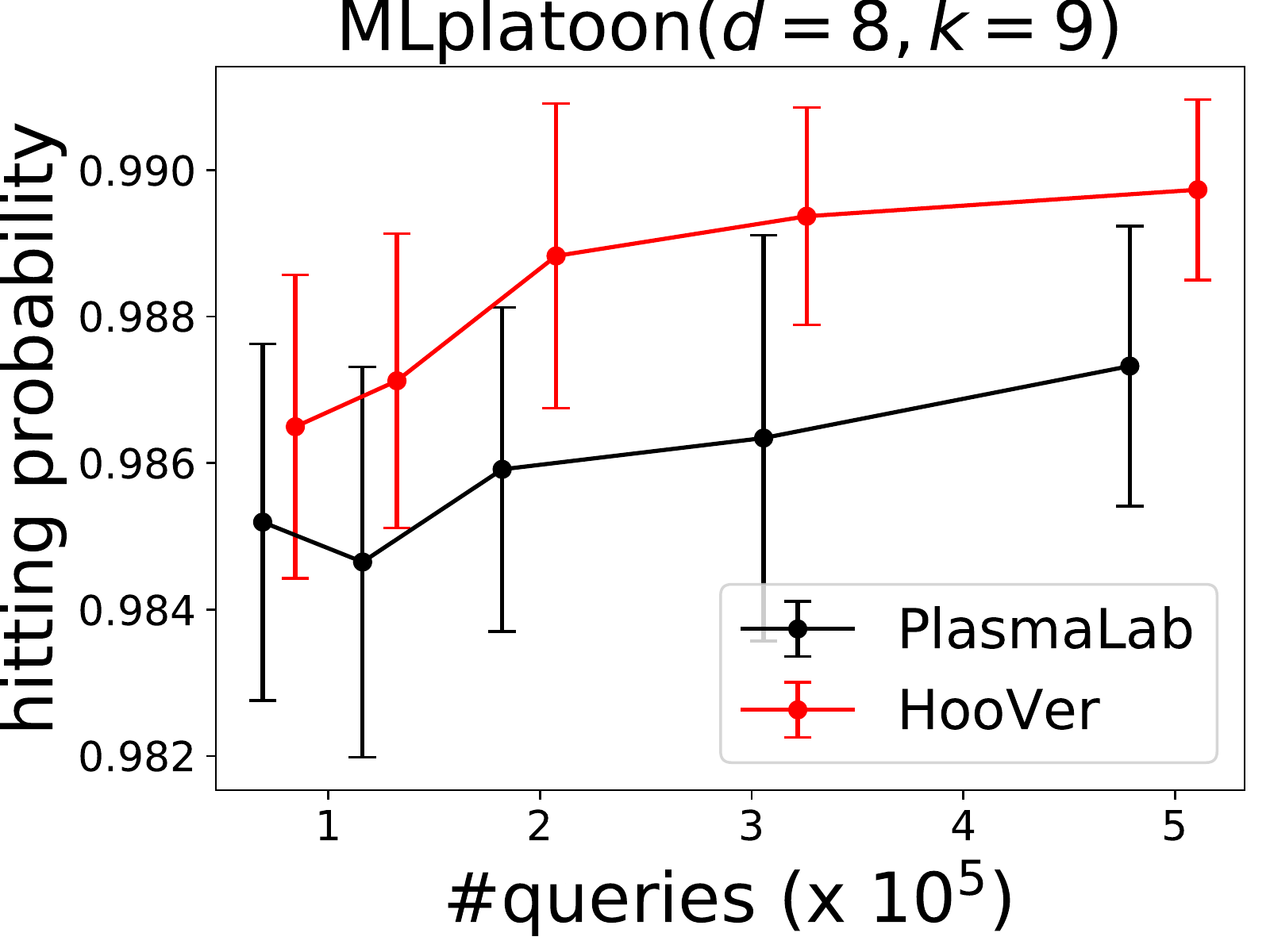}\par 
\end{multicols}
\begin{multicols}{2}
    \includegraphics[width=1\linewidth]{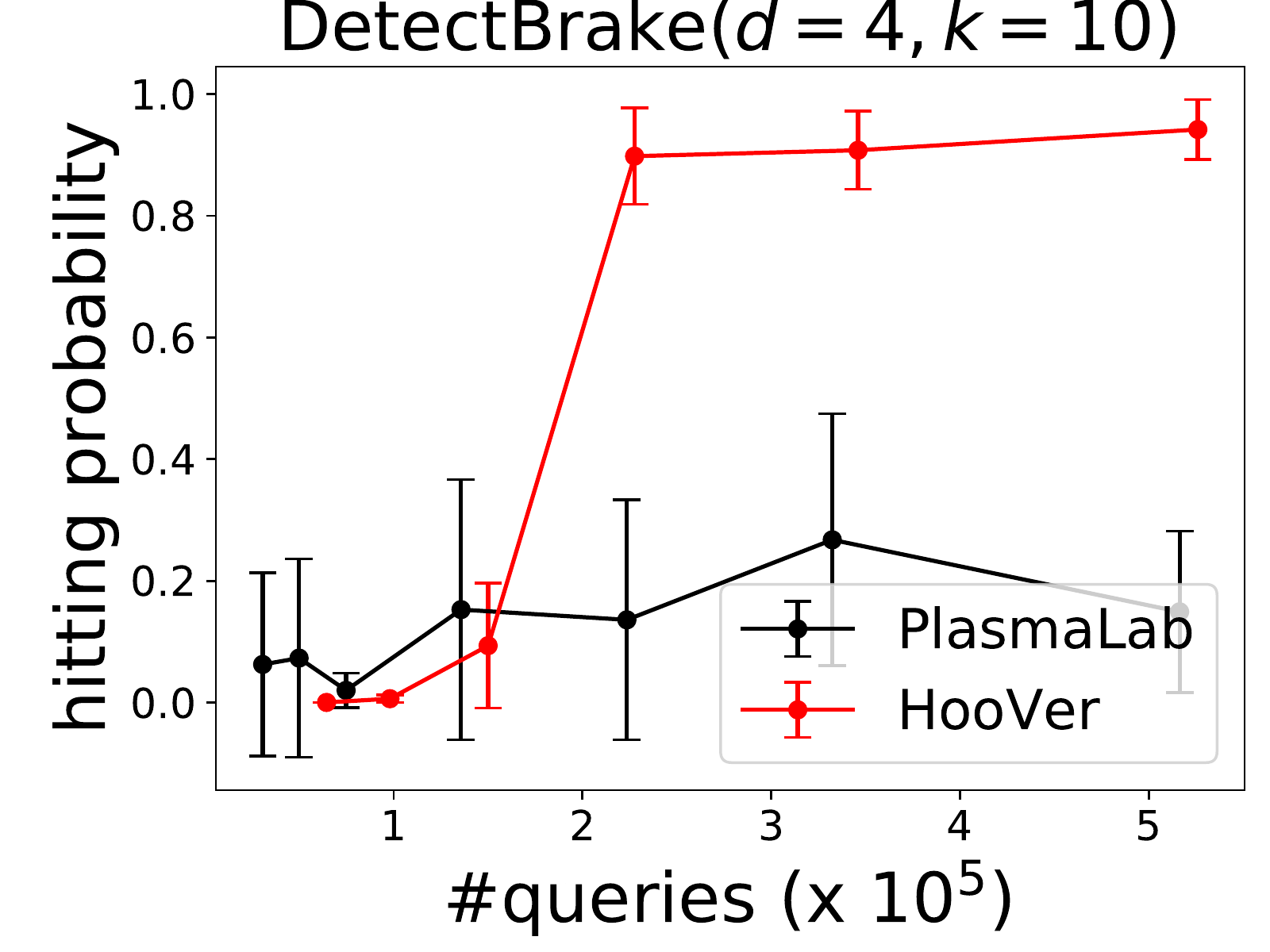}\par
    \includegraphics[width=1\linewidth]{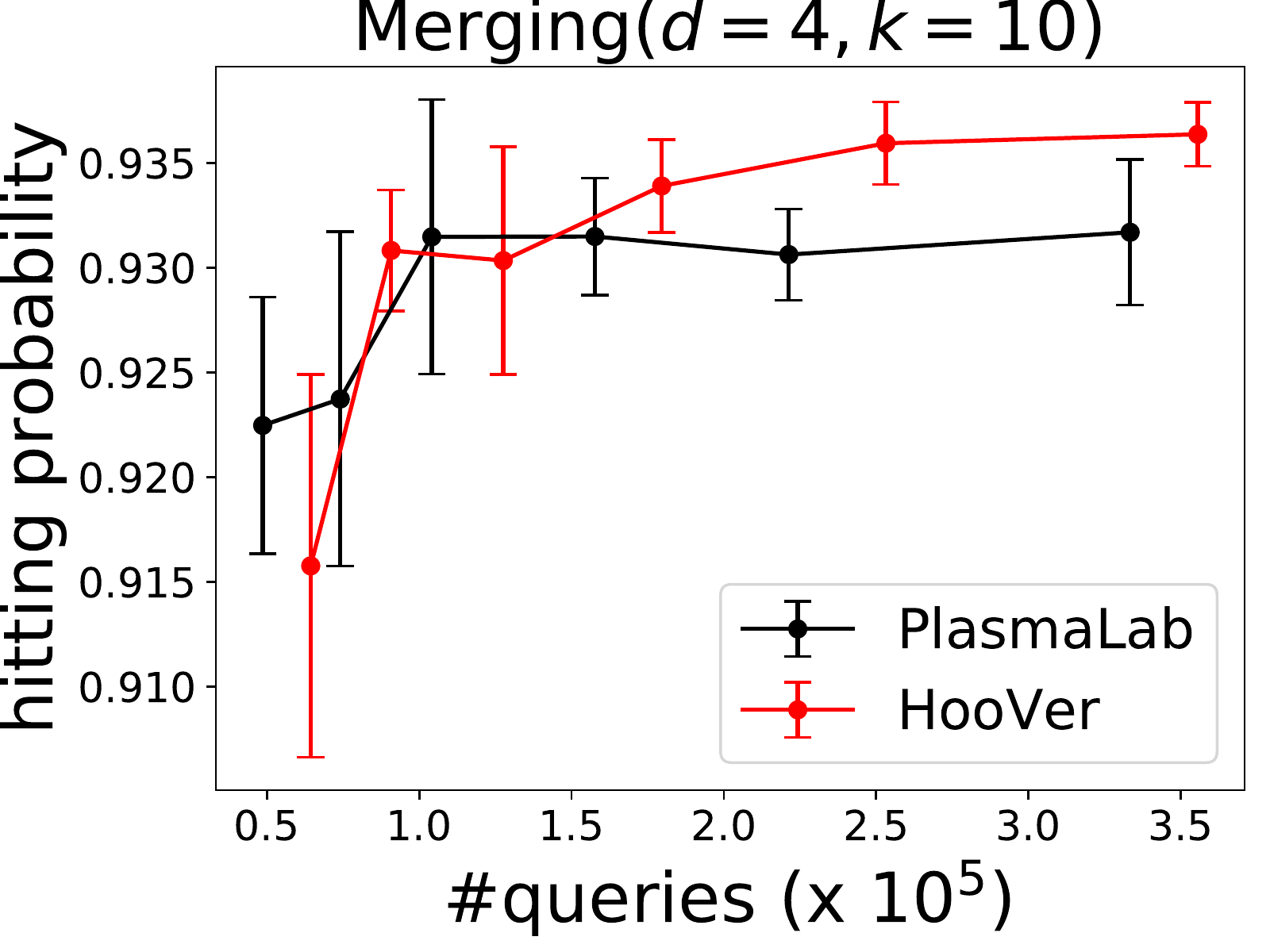}\par
\end{multicols}
\caption{\small Results on the verification benchmarks, where $d$ denotes the number of dimensions of $\Theta$, and $k$ denotes the time bound in $\phit{k}{\Unsafe}{\bar{x}_N}$. Mean and standard deviations are averaged over 10 runs. Hitting probability $\phit{k}{\Unsafe}{\bar{x}_N}$ is estimated using Monte Carlo method once $\bar{x}_N$ is returned from the tool.}
\label{fig:results}
% \vspace{-0.5cm}
\end{figure}

All our experiments were conducted on a Linux workstation with two Xeon Silver 4110 CPUs and 32 GB RAM.
Figure~\ref{fig:results} shows the performance of \toolname{} on above verification benchmarks. For each benchmark, the plots show the evolution of the estimated worst safety violation probability $\phit{k}{\Unsafe}{\bar{x}_N}$ against sampling (query) budget ($N$) for a fixed time horizon $k$. Since the actual maximum $\sup\limits_{x \in \Theta} \phit{k}{\Unsafe}{x}$ is not known, we can not evaluate the simple regret $\sup\limits_{x \in \Theta} \phit{k}{\Unsafe}{x} - \phit{k}{\Unsafe}{\bar{x}_N}$. However, comparing $\phit{k}{\Unsafe}{\bar{x}_N}$ directly is enough as the maximum is constant.

We make  two main observations: 
First, as expected, the output $\phit{k}{\Unsafe}{\bar{x}_N}$ improves with the budget. 
Second, although our implementation is not remotely optimized for running time, the running times are reasonable. For example, for {\sf SLplatoon\/}, the running time is less than $1$ millisecond per simulation, i.e.,  about $10$ minutes for 800K queries.

Table~\ref{tab:LQR} shows the performance of \toolname{} on the parameter synthesis benchmark. The estimation error is calculated as $\|\hat{K} - K^*\|_F$, where $\hat{K}$ is the estimated optimal parameter, and $K^*$ is the ground truth obtained by solving the LQR problem with $A$, $B$, $Q$, and $R$ given explicitly. In contrast, \toolname{} treats the system as a black-box and only relies on the observations obtained from the simulations. As shown in the table, the parameter found by \toolname{} tends toward the ground truth as budget increases, and it achieves a small enough error using a reasonable number of simulations.

\begin{table}[!htpb]
% \vspace{-0.3cm}
\centering
\def\arraystretch{1.3}
\setlength\tabcolsep{3pt}
\caption{\small {Performance of \toolname{} on parameter synthesis.}}
\begin{tabular}{|c|c|c|c|c|c|c|c|}
\hline
{\bf \#queries} & 1K & 2K & 4K & 8K & 16K & 32K\\
\hline
{\bf $\|\hat{K} - K^*\|_F$} & 0.594 & 0.581 & 0.222 & 0.161 & 0.052 & 0.022\\
\hline
\end{tabular}
% \vspace{0.25cm}
\label{tab:LQR}
% \vspace{-1cm}
\end{table}

Next we study the impact of two essential parameters used by the tool.
\paragraph{Impact of batch size.}
As discussed earlier in Section~\ref{sec:discuss}, the batch size parameter $\batch$ of HOO-MB can help improve the running time and memory usage without significantly sacrificing the quality of the final answer.
Table~\ref{tab:bs} shows this for  {\sf SLplatoon\/} with sampling budget $N = 800$K. The \#Nodes refers to the number of the nodes in the final tree generated by \toolname{}.
Notice that the final answer $\phit{k}{\Unsafe}{}$ does not change much when using reasonable batch sizes ($\batch \leq 400$), while the number of nodes in the tree is reduced, which in turn can reduce the running time and the memory usage by orders of magnitude. Using a very large batch sizes (e.g. $\batch \geq 1600$) starts to affect the quality of the result, which is consistent with Theorem~\ref{Th:regret_HOO_batch}.

\begin{table}[!htpb]
\small 
\centering
\def\arraystretch{1.3}
\setlength\tabcolsep{3pt}
\caption{\small Impact of batch size $\batch$ on size of tree, final result, running time, and memory usage, for sampling budget of $800$K on {\sf SLplatoon\/} model. Results are averaged over 10 runs.}
\begin{tabular}{|c|c|c|c|c|c|c|c|c|}

\hline
{\bf $\batch$} & 10 & 100 & 400 & 1600 & 6400\\
\hline
{\bf \#Nodes} & 75996 & 7596 & 1900 & 468 & 116\\
\hline
{\bf Running Time\/} (s) & 2568 & 506 & 512 & 573 & 678\\
\hline
{\bf Memory (Mb)\/} & 67.16 & 6.62 & 1.64 & 0.38 & 0.09\\
\hline
{\bf Result\/} $\phit{k}{\Unsafe}{}$  & 0.9745 & 0.9756 & 0.9741 & 0.9667 & 0.8942\\
\hline
\end{tabular}
% \vspace{0.25cm}
\label{tab:bs}
% \vspace{-0.3cm}
\end{table}

% \begin{table}[!htpb]
% \small 
% \centering
% \def\arraystretch{1.3}
% \setlength\tabcolsep{3pt}
% \caption{\small Impact of batch size $\batch$ on size of tree, final result, running time, and memory usage, for sampling budget of $800$K on {\sf SLplatoon\/} model. Results are averaged over 10 runs.}
% \begin{tabular}{|c|c|c|c|c|c|c|c|c|}

% \hline
% {\bf $\batch$} & 10 & 20 & 50 & 100 & 400 & 1600 & 6400 & 25600\\
% \hline
% {\bf \#Nodes} & 75996 & 37996 & 15196 & 7596 & 1900 & 468 & 116 & 28\\
% \hline
% {\bf Running Time\/} (s) & 2568 & 1006 & 567 & 506 & 512 & 573 & 678 & 614\\
% \hline
% {\bf Memory (Mb)\/} & 67.16 & 33.67 & 13.49 & 6.62 & 1.64 & 0.38 & 0.09 & 0.03\\
% \hline
% {\bf Result\/} $\phit{k}{\Unsafe}{}$  & 0.9745 & 0.9740 & 0.9739 & 0.9756 & 0.9741 & 0.9667 & 0.8942 & 0.8699\\
% \hline
% \end{tabular}
% % \vspace{0.25cm}
% \label{tab:bs}
% % \vspace{-0.3cm}
% \end{table}

\paragraph{Impact of smoothness parameter.} 
Table~\ref{tab:rhomax} shows how smoothness parameter $\rho_{max}$ impacts the performance of \toolname. For large  values of $\rho_{max}$ (0.95 and 0.9), the upper confidence bounds (UCB) computed in HOO-MB forces the state space exploration to be more aggressive, and the algorithm explores shallower levels of the tree more extensively. As $\rho_{max}$ decreases, \toolname{} proceeds to deeper levels of the  tree. Below a threshold ($0.8$ in this case), the algorithm becomes insensitive to variation of $\rho_{max}$. Thus, if the smoothness of the model is unknown, one can select a small $\rho_{max}$, and obtain a reasonably good estimate for  result.

\begin{table}[!htpb]
% \vspace{-0.3cm}
\centering
\def\arraystretch{1.3}
\setlength\tabcolsep{3pt}
\caption{\small Impact of smoothness parameter $\rho_{max}$ on tree and final result. Results are averaged over 10 runs.}
\begin{tabular}{|c|c|c|c|c|c|c|c|}
\hline
${\bf \rho_{max}}$ & 0.95 & 0.90 & 0.80 & 0.60 & 0.40 & 0.16 & 0.01\\
\hline
{\bf Tree depth} & 11.6 & 14.3 & 25.3 & 25.4 & 25.6 & 24.2 & 24.3\\
\hline
{\bf Result\/} $\phit{k}{\Unsafe}{}$ & 0.9644 & 0.9647 & 0.9740 & 0.9756 & 0.9754 & 0.9728 & 0.9722\\
\hline
\end{tabular}
% \vspace{0.25cm}
\label{tab:rhomax}
% \vspace{-1cm}
\end{table}

\subsection{Comparison with PlasmaLab}
\label{sec:eval-mfhoo}
Model checking tools such as Storm~\cite{dehnert2017storm} and PRISM~\cite{HKNP06} do not support MDPs defined on continuous state spaces. 
It is possible to compare \toolname{} with these tools on discrete versions of these examples, but the comparison would  not be fair as the guarantees given these tools are different.
%\footnote{We refer the curious reader to an earlier manuscript~\cite{musavi2019optimistic} with  comparisons with Storm and \toolname{}.} 
%
The SMC approach for stochastic hybrid systems presented in~\cite{ellen2015statistical} is closely related, but we could not find an implementation to compare against. Furthermore, \toolname{} uses the HOO-MB which  does not rely on a semi-metric on the state space as required by the algorithm used in~\cite{ellen2015statistical}. 
Among the tools that are currently available we found  PlasmaLab~\cite{legay2016plasma} to be  closest in several ways, and therefore, we decided to perform a deeper comparison with it. 

PlasmaLab uses a smart sampling algorithm~\cite{d2015smart} to assign the simulation budget efficiently to each scheduler of an MDP.
In order to use this algorithm, one has to set parameters $\epsilon$ and $\delta$ in the Chernoff bound, satisfying $N_{max} > \ln{(2/\delta)}/(2\epsilon^2)$, where $N_{max}$ is per-iteration simulation budget. We set the confidence parameter $\delta$  to $0.01$, and  given an $N_{max}$, the precision parameter $\epsilon$ is then obtained by $\epsilon = \sqrt{\ln{(2/\delta)}/(2\times 0.8\times N_{max})}$. In order to make a fair comparison, we developed a PlasmaLab plugin which enables PlasmaLab to use exactly the same external Python simulator as \toolname{}.

In our experiments,  \toolname{} gets better than PlasmaLab, 
as the sampling budget increases (see Figure~\ref{fig:results}). 
It is not surprising that for small budgets, before a threshold depth in the tree is reached, \toolname{} cannot give an accurate answer.  Once the number of queries exceeds the threshold, the tree-based sampling of \toolname{} works more efficiently than PlasmaLab for the given examples.

\paragraph{A conceptual example.}
To illustrate the above behavior of the tools, we consider a conceptual example with hitting probability given by $\phit{k}{\Unsafe}{x}$ directly without specifying $\mathbb{P}$, $k$ and $\Unsafe$. Given an initial state $x = (x_1,x_2)$,  $\phit{k}{\Unsafe}{x_1, x_2} = p_{max} \cdot \exp{(-\frac{(x_1-0.5)^2+(x_2-0.5)^2}{s})}$, where the parameter $s$  controls the slope of $\phit{k}{\Unsafe}{\cdot}$ around the maximum $\phit{k}{\Unsafe}{\frac{1}{2},\frac{1}{2}} = p_{max}$. The smaller the value of $s$, the sharper the slope.

\begin{figure}
% \vspace{-0.8cm}
\centering
\includegraphics[width=0.6\linewidth]{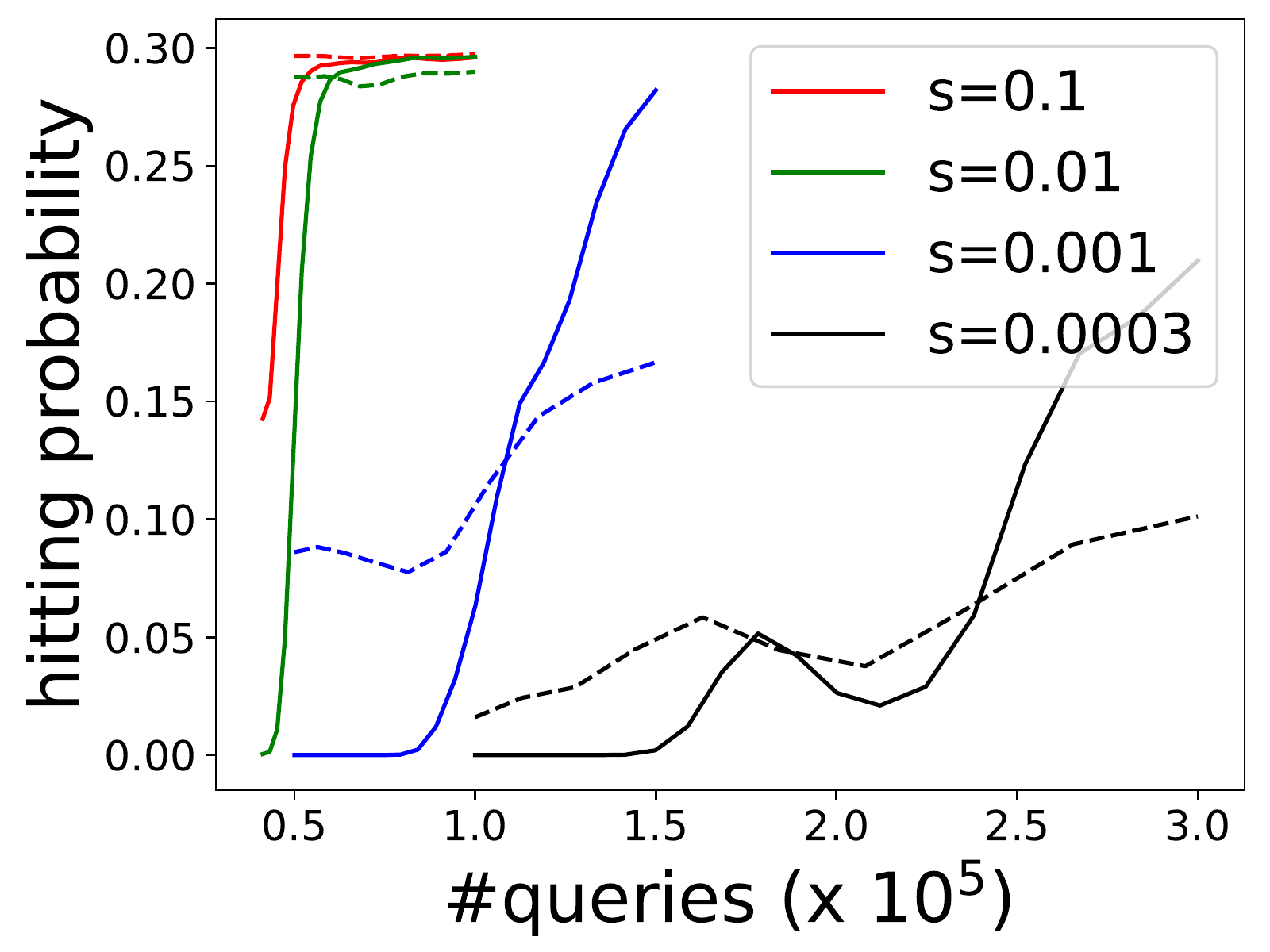}\par
\caption{\small \toolname{} (solid lines) and PlasmaLab (dashed) output in optimizing sharp functions. }
\label{fig:toy_example}
% \vspace{-0.5cm}
\end{figure}

In the experiments, we set $\Theta = \{(x_1,x_2)|0<x_1<1,~0<x_2<1\}$ and $p_{max} = 0.3$. Results are shown in Fig.~\ref{fig:toy_example}. With small query budget, PlasmaLab beats \toolname{}, however, as the budget increases \toolname{} improves swiftly and becomes better (see Figure~\ref{fig:toy_example}).
For ``easy'' objective functions (where $f_s$ is smooth, $s=0.1$), both tools perform well.
As $s$ decreases, $f_s$ becomes sharper and the most unsafe state become harder to find, the point where \toolname{} beats PlasmaLab moves to the  right. For sharp objective functions (e.g., $s=0.0003$), \toolname{} is much more sample efficient. 
This  suggests that \toolname{} might perform better than PlasmaLab in  SMC problems with  hard-to-find bugs or unsafe conditions.

\section{Conclusions}
\label{sec:conc}
We presented a new tree-based algorithm, HOO-MB, for verification and parameter synthesis of class of discrete-time nondeterministic, continuous state, Markov chains (MC) by building a connection with the multi-armed bandits literature. In this class of problem the uncertainty is in the initial states or parameters 
HOO-MB sequentially samples executions of the MC in batches and relies on a lightweight assumption about the smoothness of the objective function, to find near-optimal solutions violating the given safety requirement.
We provide theoretical regret bounds on the optimality gap in terms of the sampling budget, smoothness, near-optimality dimension, and sampling batch size. 
 We created several benchmarks models, implemented a tool (\toolname{}), and the experiments show that our approach is competitive compared to PlasmaLab in terms of sample efficiency.
%
%  \toolname{} easily handles  models with $18$-dimensional state spaces, and initial uncertainty spanning up to $8$ dimensions. Running time and memory usage can be strongly controlled with batch sizes.
 %
%
Detailed comparison with other verification and synthesis tools and exploration of general Markov decision processes with this approach would be directions for further investigation. 
\bibliographystyle{unsrt}
\bibliography{references,sayan1}
% \appendix
\appendix

\section{Appendix: Proof of Theorem \ref{Th:regret_HOO_batch}}
\label{appendix:proof-th}

We follow the proof of regret bound from~\cite{bubeck2011x,sen2019noisy}. Let $x_i \in \mathcal{X}$ be the point returned by Algorithm~\ref{Al:HOO_batch} at round $i$. Define cumulative regret at round $N$ as $$R_N = \sum_{i=1}^N (f^*-f(x_i)).$$ We first provide an upper bound for the cumulative regret and then obtain an upper bound for the simple regret.

In Algorithm~\ref{Al:HOO_batch}, once a node is chosen, it is queried for $\batch$ times. Let $L(N) = \left\lfloor \dfrac{N-1}{\batch} \right\rfloor$. Therefore, at round $N$, there are $L(N)+1$ mini-batches. Denote by $(h_j, i_j)$ the node chosen by HOO-MB in the $j$-th mini-batch, where $j = 1,\dots, L(N) +1$. Let $z_j$ denotes the arm played in the $j$-th mini-batch, where $z_1 = x_1 = \dots = x_{\batch},\ z_2= x_{\batch+1} = \dots = x_{2\batch}, \dots$. And $y_{j,k}$ denotes the $k$-th observation in the $j$-th mini-batch.

Using these notations, we rewrite the cumulative regret at round $N$ by splitting it into two parts, regret of the first $L$ batches and that of the last batch: $$\Expectation{R_N} = \sum_{j=1}^{L(N)} \batch (f^*-f(z_j)) + (N - \batch L(N)) \cdot (f^*-f(z_{L(N)+1})).$$

Now, let $I_h$ be the set of all nodes at depth $h$ that are $2\nu\rho^h$-optimal, where $h=0,1,2,...$. Hence, $I_0={(0,1)}$. Now let $I$ be the union of $I_h$'s. In addition, let $\mathcal{J}_h$ be the set of nodes at depth $h$ that are not in $I$ and whose parents are in $I_{h-1}$. Then denote by $\mathcal{J}$ the union of all $\mathcal{J}_h$'s.

Similar to~\cite{bubeck2011x}, we partition the tree $\mathcal{T}$ into three subsets, $\mathcal{T}=\mathcal{T}^1\cup\mathcal{T}^2\cup\mathcal{T}^3$, as follows. Let $H$ be an integer to be chosen later. The set $\mathcal{T}^1$ contains the descendants of the nodes in $I_H$ (by convention, a node is considered its own descendant); $\mathcal{T}^2=\cup_{0 \leq h <H}\ I_h$; $\mathcal{T}^3$ contains the descendants of the nodes in $\cup_{1 \leq h \leq H}\ \mathcal{J}_h$.

It is easy to see from the algorithm that any node chosen by the algorithm is chosen at most once. Since $(h_j, i_j)$ belongs to either of the $\mathcal{T}^i$'s, where $i=1,2,3$, we can decompose the cumulative regret according to which of the sets $\mathcal{T}^i$ the node $(h_j, i_j)$ belongs to:

\begin{equation}
    \Expectation{R_N}=\Expectation{R_{N,1} + R_{N,2} + R_{N,3}},
\end{equation}
where $R_{N,i}$ is the part of $R_N$ for all $(h_{j}, i_{j}) \in \mathcal{T}^i$. Specifically, we have

\begin{align*}
    & \Expectation{R_{n,i}} = \mathbb{E}\Biggl[\sum_{j=1}^{L(N)} \batch (f^*-f(z_j))\mathbb{I}_{\{(h_j, i_j) \in \mathcal{T}^i\}} \\
    & + (N-\batch L(N)) \cdot (f^*-f(z_{L(N)+1}))\mathbb{I}_{\{(h_{L(N)+1}, i_{L(N)+1}) \in \mathcal{T}^i\}}\Biggr]. 
\end{align*}

First, $\Expectation{R_{N,1}}$ is easy to bound. All nodes in $\mathcal{T}^1$ are $2\nu\rho^H$-optimal. According to Assumption~\ref{Ass:HOO}, all points in these cells are $4\nu\rho^H$-optimal. Thus we obtain the following
\begin{align*}
    \Expectation{R_{N,1}} \leq 4\nu\rho^HbL(N) + 4\nu\rho^H(N - \batch L(N)) = 4\nu\rho^HN.
\end{align*}

For $h\geq 0$, any node $(h,i)\in \mathcal{T}^2$ belongs to $I_h$, and thus is $2\nu\rho^h$-optimal. Therefore, all points in any cell $(h,i)\in \mathcal{T}^2$ are at least $4\nu\rho^h$-optimal. Also, by Definition~\ref{def:mod_near_opt}, we have that $|I_h| \leq B(\nu, \rho)\rho^{-d_m(\nu,\rho)h}$. Therefore, using the fact that each node in $\mathcal{T}^2$ is played at most in one mini-batch, we have the following:
\begin{align*}
    \Expectation{R_{N,2}} \leq b\sum_{h=0}^{H-1} 4\nu\rho^{h}|I_h| \leq 4b \nu B \sum_{h=0}^{H-1} \rho^{h(1-d_m)}.
\end{align*}

For $h\geq 0$, any node $(h,i)\in \mathcal{T}^3$ belongs to $\mathcal{J}_h$. Since the parents of any node $(h,i)\in \mathcal{J}_{h}$, belong to $I_{h-1}$, these nodes are at least $2\nu\rho^{h-1}$-optimal. Then by Assumption~\ref{Ass:HOO}, we have that all points that lie in these cells are at least $4\nu\rho^{h-1}$-optimal. There we can get
\begin{align}\label{eq:rn3}
    \Expectation{R_{N,3}} \leq b\sum_{h=1}^{H} 4\nu\rho^{h-1}\sum_{i:(h,i)\in\mathcal{J}_h} \Expectation{t_{h,i}(N)},
\end{align}

where $t_{h,i}(N)$ is the number of mini-batches where a descendant of node $(h,i)$ is played up to and including round $N$.

The following lemma  bounds the expected number of times  the algorithm visits a suboptimal node $(h,i)$ at the end of round $n$, in terms of the  smoothness parameters $(\nu,\rho)$, batch size $\batch$, and the sub-optimality gap $\Delta_{h,i}$. This bound is  used in the regret bound for HOO-MB. 

\begin{lemma}
\label{l:4}
For any  $n > \batch +1$, and any sub-optimal node $(h,i)$ with $\Delta_{h,i}>\nu\rho^h$:
\begin{align*}
    \mathbb{E}[t_{h,i}(n)] \leq \dfrac{8\sigma^2\log{(\lfloor \dfrac{n-1}{\batch}\rfloor +1)}}{\batch(\Delta_{h,i}-\nu\rho^h)^2}+4.
\end{align*}
\end{lemma}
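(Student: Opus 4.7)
The plan is to adapt the proof of Lemma~14 of~\cite{bubeck2011x} to the mini-batch setting via two systematic changes: the round index $n$ is replaced by the mini-batch count $L(n)+1 = \lfloor (n-1)/b\rfloor + 1$, and every concentration statement on $\hat{f}_{h,i}$ uses the effective sample size $b\cdot t_{h,i}$, since each visit to a node produces $b$ independent $\sigma^2$-sub-Gaussian observations by Assumption~\ref{ass:observe}.

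First, I would fix the threshold
\[
u := \left\lceil \frac{8\sigma^2 \log(L(n)+1)}{b\,(\Delta_{h,i}-\nu\rho^h)^2} \right\rceil
\]
and write the trivial decomposition
\[
t_{h,i}(n) \leq u + \sum_{j=1}^{L(n)+1}\mathbb{I}\{(h,i)\text{ visited at mini-batch }j,\ t_{h,i}(j-1)\geq u\}.
\]
Taking expectations, the first term is the $u$ piece of the bound; the remaining task is to show that the expectation of the indicator sum is at most $4$.

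Next, I would use the sub-Gaussian Hoeffding--Azuma inequality on the empirical mean $\hat{f}_{h,i}(j-1)$, which is an average of $b\,t_{h,i}(j-1)$ $\sigma^2$-sub-Gaussian observations by Assumption~\ref{ass:observe}, to obtain
\[
\Pr\!\left(\hat{f}_{h,i}(j-1) \geq \bar{f}_{h,i}(j-1) + \sqrt{\tfrac{2\sigma^2\log j}{b\,t_{h,i}(j-1)}}\right) \leq j^{-2},
\]
where $\bar{f}_{h,i}(j-1)$ denotes the average of $f$ over the pre-states sampled from $\mathcal{P}_{h,i}$. On the complementary good event with $t_{h,i}(j-1)\geq u$, plugging the concentration bound into the definition of $U_{h,i}$ and using $\bar{f}_{h,i}\leq \sup_{\mathcal{P}_{h,i}} f = f^*-\Delta_{h,i}$ yields
\[
U_{h,i}(j-1) \leq (f^*-\Delta_{h,i}) + 2\sqrt{\tfrac{2\sigma^2\log(L(n)+1)}{b\,u}} + \nu\rho^h \leq f^*,
\]
where the choice of $u$ makes the square-root term equal to $\Delta_{h,i}-\nu\rho^h$ and the cancellation closes the chain. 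Combined with the structural fact, analogous to Lemma~6 of~\cite{bubeck2011x}, that HOO-MB visits a suboptimal node only if some ancestor's $U$-value exceeds $f^*$, the indicator at mini-batch $j$ can only fire on the failure of concentration at an ancestor. A union bound over the ancestor chain followed by $\sum_{j\geq 1} j^{-2}\leq \pi^2/6$ yields the $+4$ constant.

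The main obstacle is the bookkeeping in the third step: one must verify that the ancestor-chain argument of~\cite{bubeck2011x} ports verbatim to mini-batch indexing (timing shifts but the $B$-value recursion is unchanged) and that the union bound over the $h$ ancestors combined with the sub-Gaussian tail constants indeed collapses to the clean additive $4$. A secondary subtlety is that $\hat{f}_{h,i}$ aggregates observations from \emph{different} pre-states inside $\mathcal{P}_{h,i}$, so the sub-Gaussian concentration is naturally stated against the empirical mean of the sampled $f$-values, which is then bounded uniformly by $\sup_{\mathcal{P}_{h,i}} f$ to recover $\Delta_{h,i}$ from the node's definition.
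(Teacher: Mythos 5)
Your overall strategy --- porting Lemma 14 of the HOO analysis to mini-batch time $L(n)+1=\lfloor (n-1)/\batch\rfloor+1$ with effective variance $\sigma^2/\batch$ --- is the same as the paper's, and your treatment of the event that the suboptimal node's own $U$-value stays above $f^*$ despite $t_{h,i}\geq u$ (the cancellation $2\sqrt{2\sigma^2\log(L(n)+1)/(\batch u)}=\Delta_{h,i}-\nu\rho^h$) matches the paper's Lemma~\ref{l:l16}. However, there is a genuine gap in your third step. The structural fact you invoke is not the right one: a suboptimal node $(h,i)$ can be selected at mini-batch $j$ even when $U_{h,i}(j)\leq f^*$ and concentration holds at $(h,i)$ and every one of its ancestors. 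The correct dichotomy (the paper's Lemma~\ref{l:l14}, i.e.\ Lemma 14 of~\cite{bubeck2011x}) is: if $(h,i)$ is played, then either $U_{h,i}(j)>f^*$, \emph{or} $U_{s,i^*_s}(j)\leq f^*$ for some \emph{optimal} node $(s,i^*_s)$ --- and for $s>l$ (with $l$ the depth of the deepest optimal ancestor) these nodes are not ancestors of $(h,i)$ at all, but lie on the competing branch whose $B$-value the algorithm declined to follow. Your proposal never bounds the probability of this second event. Doing so is the content of the paper's Lemma~\ref{l:l15}, and it is precisely where Assumption~\ref{Ass:HOO} (with $c=0$) enters the proof of this lemma: one needs $f(z)+\nu\rho^s\geq f^*$ for points $z$ sampled in optimal cells so that $U_{s,i^*_s}\leq f^*$ forces a downward concentration failure. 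Without this piece, ``the indicator can only fire on the failure of concentration at an ancestor'' is false, and the additive constant $4$ cannot be established.

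A secondary, quantitative issue: the tail bound $\Pr\{\hat f_{h,i}(j-1)\geq \bar f_{h,i}(j-1)+\sqrt{2\sigma^2\log j/(\batch\, t_{h,i}(j-1))}\}\leq j^{-2}$ does not follow from a single application of the sub-Gaussian Hoeffding bound: at that deviation width the exponent gives $j^{-1}$, and in any case $t_{h,i}(j-1)$ is a random quantity correlated with the observations, so one needs a union bound over its possible values together with optional skipping and the Hoeffding--Azuma inequality, as in the paper's Lemmas~\ref{l:l15} and~\ref{l:l16}; this is what produces the exponents $(L(n)+1)^{-3}$ and $(L(j)+1)(L(n)+1)^{-4}$ whose sums over $j$ yield the constant in the statement. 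Your observation that $\hat f_{h,i}$ aggregates observations at different points of $\partition{h}{i}$ and must be compared against $\sup_{x\in\partition{h}{i}}f(x)$ is correct and is handled the same way in the paper.
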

A sub-optimal node $(h,i)$ might be visited if either the $U_{h,i}$ is greater than $f^*$ or the $U_{h,i*}$ is underestimated. 
This lemma combines these two cases. let's skip the proof of Lemma~\ref{l:4} for now (see for detailed proof in Appendix~\ref{appendix:proof-l}). 

Using Lemma~\ref{l:4}, inequality (\ref{eq:rn3}) can be written as:
\begin{align*}
    \Expectation{R_{N,3}} \leq b\sum_{h=1}^{H} 4\nu\rho^{h-1}|\mathcal{J}_h|\big( \dfrac{8\sigma^2\log{(L(N)+1)}}{b\nu^2\rho^{2h}} + 4 \big).
\end{align*}
Using the fact that $|\mathcal{J}_h| \leq 2|I_{h-1}|$, we have:

\begin{align*}
    \Expectation{R_{N,3}} \leq 8b\nu B \sum_{h=1}^{H} \rho^{(h-1)(1-d_m)} \big( \dfrac{8\sigma^2\log{(L(N)+1)}}{b\nu^2\rho^{2h}} + 4 \big).
\end{align*}
\\

Now, putting the obtained bounds together, we get

\begin{align*}
    \Expectation{R_N} &\leq 4\nu\rho^HN + 4b\nu B \sum_{h=0}^{H-1} \rho^{h(1-d_m)}\\
    & + 8b\nu B\sum_{h=1}^{H} \rho^{(h-1)(1-d_m)}\bigg( \dfrac{8\sigma^2\log{(L(N) +1)}}{b\nu^2\rho^{2h}} + 4\bigg)
\end{align*}
\begin{align}\label{eq:bound1}
    & = O\bigg(\rho^HN + B \big(log{(L(N) +1)} + b \big)\rho^{-H(1+d_m)}\bigg).
\end{align}
Now choosing $H$ such that $\rho^{H} = O\bigg(B\dfrac{log{(L(N)+1)} + b}{N}\bigg)$ and using $L(N)=\lfloor \dfrac{N-1}{b} \rfloor$, minimizes the bound in~(\ref{eq:bound1}) as follows:

\begin{equation}
    \Expectation{R_N} = O\bigg( BN^{\frac{d_m+1}{d_m+2}}\ \bigg(\log (\lfloor \dfrac{N-1}{b} \rfloor + 1 ) + b \bigg)^{\frac{1}{d_m+2}}\bigg)   
\end{equation}

Using the Remark $1$ from~\cite{bubeck2011x}, we can relate the simple regret $S_N$ and the cumulative regret $R_N$ and obtain the following:
$$\mathbb{E}(S_N)=O\bigg(\ \bigg(\frac{B\log (\lfloor \dfrac{N-1}{b} \rfloor + 1 ) + b}{N} \bigg)^{\frac{1}{d_m+2}}\bigg),$$ which concludes the proof of the Theorem~\ref{Th:regret_HOO_batch}.

\section{Appendix: Proof of Lemma~\ref{l:4}}
\label{appendix:proof-l}
% \section{Proof of Lemma~\ref{l:4}} \label{app:proof-l}
Let $x_i \in \mathcal{X}$ be the point returned by Algorithm~\ref{Al:HOO_batch} at round $i$. In Algorithm~\ref{Al:HOO_batch}, once a node is chosen, it is queried for $\batch$ times. Let $L(n) = \left\lfloor \dfrac{n-1}{\batch} \right\rfloor$, then at round $n$, there are $L(n)+1$ mini-batches. Denote by $(h_j, i_j)$ the node chosen by HOO-MB in the $j$-th mini-batch, where $j = 1,\dots, L(n) +1$. Let $z_j$ denotes the arm played in the $j$-th mini-batch, where $z_1 = x_1 = \dots = x_{\batch},\ z_2= x_{\batch+1} = \dots = x_{2\batch}, \dots$. And $y_{j,k}$ denotes the $k$-th observation in the $j$-th mini-batch. 

We begin the proof of Lemma~\ref{l:4}, with a lemma (Lemma $14$ from~\cite{bubeck2011x}) which can be adapted to the Algorithm~\ref{Al:HOO_batch} as follows:

\begin{lemma}\label{l:l14}
Let $(h,i)$ be a sub-optimal node, i.e. $\Delta_{h,i} > 0$. Let $0 \leq l \leq h-1$ be the largest depth such that $(l,i^*_l)$ is on the path from the root $(0,1)$ to $(h,i)$, where $i^*_l$ is the optimal nodes at depth $l$. Then for all integers $u \geq 0$, we have
\begin{align*}\label{eq:u}
    \mathbb{E}[t_{h,i}(n)] \leq u &\sum_{j=u+1}^{L(n) + 1} \mathbb{P} \bigg\{[U_{s,i^*_s}(j) \leq f^*\ for\ some\  \\
    &\hspace{2cm} s\in\{l+1,...,j-1\}] \ \\
    &\hspace{1cm} or \ [\ t_{h,i}(j)>u\ and\ U_{h,i}(j)>f^*]\bigg\}.
\end{align*}
\end{lemma}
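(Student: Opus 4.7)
The statement is the mini-batch analog of Lemma~14 in~\cite{bubeck2011x}; batching changes only the counter update rule and the concentration exponent in $U_{h,i}$, not the traversal rule or the $B$-recursion, so the original argument adapts in a structure-preserving way. The plan is three steps: a visit-count decomposition, a deterministic structural inclusion about the traversed path, and a recombination.

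First (decomposition), I would write $t_{h,i}(n) = \sum_{j=1}^{L(n)+1} \mathbb{I}\{(h,i) \in \mathit{path}_j\}$, where $\mathit{path}_j$ is the root-to-leaf path selected by $\mathit{Traverse}$ at mini-batch $j$, and split each indicator according to whether $t_{h,i}(j) \leq u$. The small-count part contributes at most $u$ in total (the $(u{+}1)$-st visit is the first time the count exceeds $u$), and since $t_{h,i}(j) \leq j$, the large-count part starts at $j = u+1$, yielding
\begin{align*}
	\mathbb{E}[t_{h,i}(n)] \leq u + \sum_{j=u+1}^{L(n)+1} \mathbb{P}\bigl\{(h,i) \in \mathit{path}_j \text{ and } t_{h,i}(j) > u\bigr\}.
\end{align*}

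Second (the key structural inclusion), I would show
\begin{align*}
	\{(h,i) \in \mathit{path}_j\} \subseteq \bigl\{\exists s \in \{l{+}1,\ldots,j{-}1\}:\ U_{s,i^*_s}(j) \leq f^*\bigr\} \cup \bigl\{U_{h,i}(j) > f^*\bigr\}.
\end{align*}
Suppose $(h,i) \in \mathit{path}_j$. At $(l, i^*_l)$, the deepest optimal ancestor on the path, the descent chose the child $c$ of $(l,i^*_l)$ that is an ancestor of $(h,i)$ over $(l+1, i^*_{l+1})$, so $B_{l+1,c}(j) \geq B_{l+1,i^*_{l+1}}(j)$. Unrolling the recursion $B_{h',i'} = \min\{U_{h',i'},\ \max(B_{\text{children}})\}$ along the optimal path (using that one child of $(h', i^*_{h'})$ is $(h'+1, i^*_{h'+1})$, and that $B \equiv +\infty$ on not-yet-expanded fringe children) yields $B_{l+1,i^*_{l+1}}(j) \geq \min_{s=l+1}^{D_j} U_{s,i^*_s}(j)$, where $D_j \leq j{-}1$ is the depth of the optimal path currently in the tree, since at most one node is inserted per mini-batch. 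Along the traversed path from $c$ down to $(h,i)$, the same recursion together with the rule that $\mathit{Traverse}$ picks the child with larger $B$-value gives $B_{h',i'} \leq B_{h'+1,\text{chosen}}$, so $B$-values are non-decreasing going down this path, hence $B_{l+1,c}(j) \leq B_{h,i}(j) \leq U_{h,i}(j)$. Chaining, $\min_{s=l+1}^{j-1} U_{s,i^*_s}(j) \leq U_{h,i}(j)$, and the two-case disjunction follows by comparing this minimum with $f^*$.

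Third (combination), I would substitute the inclusion into the sum from Step~1, keeping the constraint $\{t_{h,i}(j) > u\}$ only inside the second disjunct (it is automatic once we restrict to $j \geq u+1$, but pairing it with $\{U_{h,i}(j) > f^*\}$ is the form needed downstream when applying sub-Gaussian concentration to $\hat{f}_{h,i}$), to obtain the stated bound. The main obstacle I anticipate is the bookkeeping in Step~2: identifying the correct range $\{l{+}1,\ldots,j{-}1\}$ using the one-insertion-per-batch property of Algorithm~\ref{Al:HOO_batch}, and properly handling fringe children where $B$ has not yet been updated from $+\infty$. These details are standard in HOO-style analyses and are unaffected by batching, since the $\mathit{Traverse}$ routine, the $B$-update rule, and the child-initialization step in Algorithm~\ref{Al:HOO_batch} coincide with those in standard HOO.
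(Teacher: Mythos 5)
Your proof is correct and follows exactly the argument of Lemma~14 of the cited X-armed bandits paper, which is all the paper itself offers here---it defers the proof entirely to that reference---so your reconstruction (the visit-count decomposition with the small-count part bounded by $u$, the $B$-value chaining $\min_{s}U_{s,i^*_s}(j)\leq B_{l+1,i^*_{l+1}}(j)\leq B_{l+1,c}(j)\leq U_{h,i}(j)$ along the optimal and traversed branches, and the depth bound $j-1$ from one insertion per mini-batch) supplies precisely the details the paper omits. The only discrepancy is with the statement rather than the proof: the displayed bound should read $u+\sum_{j=u+1}^{L(n)+1}\mathbb{P}\{\cdot\}$ rather than $u\sum_{j=u+1}^{L(n)+1}\mathbb{P}\{\cdot\}$ (an alignment-character typo), which is the form you derive and the form used downstream in the proof of Lemma~\ref{l:4}.
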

See ~\cite{bubeck2011x} for proof of Lemma~\ref{l:l14}.\\

In order to come up with a bound for $\mathbb{E}[t_{h,i}(n)]$, we will use the following two lemmas (Lemma~\ref{l:l15} and Lemma~\ref{l:l16}). Lemma~\ref{l:l15} gives a bound for the first term, and Lemma~\ref{l:l16} gives a bound for the second term.

\begin{lemma}\label{l:l15}
Let Assumption~\ref{Ass:HOO} hold. Then for all optimal nodes $(h,i)$ and all integers $n \geq b$,

\begin{align*}
    \mathbb{P}\{ U_{h,i}(n) \leq f^* \} \leq (L(n) +1)^{-3}. 
\end{align*}
\end{lemma}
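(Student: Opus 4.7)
The plan is to decompose the event $\{U_{h,i}(n) \leq f^*\}$ into a deterministic bias term controlled by smoothness, and a random deviation term controlled by sub-Gaussian concentration, and then bound each separately.

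\textbf{Step 1 (Bias from smoothness).} Since $(h,i)$ is optimal, we have $\Delta_{h,i} = 0 \leq c \nu \rho^h$ for every $c \geq 0$. Invoking Assumption~\ref{Ass:HOO} with $c=0$ yields $f^* - f(x) \leq \nu\rho^h$ for every $x \in \mathcal{P}_{h,i}$. Because every sample contributing to the empirical mean $\hat{f}_{h,i}(n)$ lies in some descendant cell of $(h,i)$, and hence in $\mathcal{P}_{h,i}$, the conditional mean of each observation is bounded below by $f^* - \nu\rho^h$. Writing $\bar{f}_{h,i}(n)$ for the average of the (random) conditional means $f(X_s)$ over all $b\,t_{h,i}(n)$ samples at this node, we obtain $\bar{f}_{h,i}(n) \geq f^* - \nu\rho^h$ almost surely.

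\textbf{Step 2 (Reduction to deviation event).} Let $m = L(n)+1$. Rearranging the definition of $U_{h,i}$, the event $\{U_{h,i}(n) \leq f^*\}$ is equivalent to
\[
\Bigl\{\hat{f}_{h,i}(n) - \bar{f}_{h,i}(n) \;\leq\; (f^* - \nu\rho^h - \bar{f}_{h,i}(n)) - \sqrt{\tfrac{2\sigma^2 \ln m}{b\, t_{h,i}(n)}}\Bigr\}.
\]
By Step~1 the first parenthesized term is non-positive, so the event is contained in
\[
\Bigl\{\hat{f}_{h,i}(n) - \bar{f}_{h,i}(n) \;\leq\; -\sqrt{\tfrac{2\sigma^2 \ln m}{b\, t_{h,i}(n)}}\Bigr\}.
\]

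\textbf{Step 3 (Martingale concentration).} By Assumption~\ref{ass:observe} the centred observations $Y_s - f(X_s)$ form a $\sigma^2$-sub-Gaussian martingale difference sequence with respect to the algorithm's filtration. For any deterministic $t$, applying the Hoeffding--Azuma inequality to the $b t$ such terms gives
\[
\Pr\Bigl(\hat{f}_{h,i}(n) - \bar{f}_{h,i}(n) \leq -\sqrt{\tfrac{c\,\sigma^2 \ln m}{b\, t}} \,\Big|\, t_{h,i}(n)=t\Bigr) \leq m^{-c/2}.
\]
Since $t_{h,i}(n) \in \{1,\ldots,m\}$, taking a union bound over $t$ (using a slightly inflated constant inside the square root, as is standard in the analysis of HOO; see~\cite{bubeck2011x,sen2019noisy}) delivers the advertised bound $(L(n)+1)^{-3}$.

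The main obstacle is Step~3: $t_{h,i}(n)$ is a random variable that depends on the history of choices, so the concentration inequality must be applied to a stopped martingale. The cleanest route is the peeling/union-bound argument sketched above, mirroring exactly the proof of Lemma~15 in \cite{bubeck2011x} but with the bounded-reward Hoeffding bound replaced by the sub-Gaussian Azuma--Hoeffding bound, and with each ``pull'' replaced by a mini-batch of $b$ i.i.d.\ sub-Gaussian observations---which is precisely why the effective sample count in the confidence term is $b\,t_{h,i}$ rather than $t_{h,i}$.
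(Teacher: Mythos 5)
Your proposal is correct and follows essentially the same route as the paper's proof: use Assumption~\ref{Ass:HOO} with $c=0$ to make the bias term $f(z_l)+\nu\rho^h-f^*$ nonnegative over all descendants of the optimal node, reduce to a one-sided deviation event for the centred batch means (which are $\sigma^2/b$-sub-Gaussian), and control the random count $t_{h,i}(n)$ via optional skipping and a union bound over its at most $L(n)+1$ possible values, exactly as in Lemma~15 of~\cite{bubeck2011x}. Your parenthetical about needing a slightly inflated constant in the confidence radius to make the exponent come out to $-3$ after the union bound is a fair observation that applies equally to the paper's own argument.
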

\begin{proof}
For the cases that $(h,i)$ was not chosen in the first $n$ rounds, by convention, $U_{h,i}(n)=+\infty$. Therefore, We focus on the event $\{t_{h,i}(n) \geq 1\}$:

\begin{align*}
    \mathbb{P} \{ &U_{h,i}(n) \leq f^*\ and\ t_{h,i}(n) \geq 1 \}\\
    &= \mathbb{P}\big\{ \hat{f}_{h,i}(n) + \sqrt{\dfrac{2\sigma^2\log{(L(n) +1)}}{bt_{h,i}(n)}} + \nu\rho^h \leq f^*\ \\
    &\hspace{5cm} \hbox{and}\ t_{h,i}(n) \geq 1 \big\}\\
    & = \mathbb{P}\big\{ t_{h,i}(n)\hat{f}_{h,i}(n) + t_{h,i}(n)(\nu\rho^h-f^*) \leq\\
    &\hspace{0.5cm}-\sqrt{\dfrac{2\sigma^2t_{h,i}(n)\log{(L(n) +1)}}{b}}\ \hbox{and}\ t_{h,i}(n) \geq 1 \big\}\\
    &= \mathbb{P}\big\{\sum_{l=1}^{L(n) +1} \big( \dfrac{\sum_{j=1}^{b} y_{l,j}}{b} -f(z_l)\big) \mathbb{I}_{\{(h_l,i_l)\in\mathcal{C}(h,i)\}}\\
    &\hspace{1cm} + \sum_{l=1}^{L(n) +1} (f(z_l) + \nu\rho^h-f^*)\mathbb{I}_{\{(h_l,i_l)\in\mathcal{C}(h,i)\}}\\
    &\hspace{1cm} \leq -\sqrt{\dfrac{2\sigma^2t_{h,i}(n)\log{(L(n) +1)}}{b}}\ \\
    & \hspace{5cm} \hbox{and}\ t_{h,i}(n) \geq 1 \big\},
\end{align*}
where $z_l$ is the point chosen in the $l$-th mini-batch, and $y_{l,1},y_{l,2},...,y_{l,k}$ are $k$ observations at point $z_l$, and $\mathcal{C}(h,i)$ denotes the descendants of the node (h,i). According to the Assumption~\ref{Ass:HOO} with $c=0$, $(f(z_l) + \nu\rho^h-f^*)\mathbb{I}_{\{(h_l,i_l)\in\mathcal{C}(h,i)\}} \geq 0$. Therefore we can write:
\begin{align*}
    & \mathbb{P} \{ U_{h,i}(n) \leq f^*\ \hbox{and}\ t_{h,i}(n) \geq 1 \}\\
    & \leq \mathbb{P}\big\{\sum_{l=1}^{L(n) +1} \big( f(z_l) - \dfrac{\sum_{j=1}^{b} y_{l,j}}{b}\big)\mathbb{I}_{\{(h_l,i_l)\in\mathcal{C}(h,i)\}}\\
    &\hspace{1cm} \geq \sqrt{\dfrac{2\sigma^2t_{h,i}(n)\log{(L(n) +1)}}{b}}\ \hbox{and}\ t_{h,i}(n) \geq 1 \big\}. 
\end{align*}
Since $y_{l,j}$ is assumed to be $\sigma^2$-sub-Gaussian, $\dfrac{\sum_{j=1}^{b} y_{l,j}}{b}$ is $\dfrac{\sigma^2}{b}$-sub-Gaussian. Similar to the approach in proof of Lemma $15$ in~\cite{bubeck2011x}, we can take care the last inequality with union bound, optional skipping and the Hoeffding-Azuma inequality and obtain the following:
\begin{align*}
    \mathbb{P} & \{ U_{h,i}(n) \leq f^*\ and\ t_{h,i}(n) \geq 1 \} \leq (L(n) +1)^{-3},
\end{align*}
where we conclude the proof of lemma~\ref{l:l15}.
\hfill $\blacksquare$
\end{proof}

\begin{lemma}\label{l:l16}
For all integers $j \leq n$, all sub-optimal nodes $(h,i)$ such that $\Delta_{h,i} > \nu \rho^h$, and all integers $u \geq 1$ such that $$u\geq \dfrac{8\sigma^2\log{(L(n) +1)}}{b(\Delta_{h,i}-\nu\rho^h)^2},$$ one has $$\mathbb{P}\{U_{h,i}(j) >f^*\ and\ t_{h,i}(j)>u \} \leq (L(j) +1)(L(n) +1)^{-4}.$$
\end{lemma}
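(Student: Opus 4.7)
My plan is to mirror the proof of Lemma~16 in~\cite{bubeck2011x}, adapted to the mini-batch version of HOO and to $\sigma^2$-sub-Gaussian observations. The core idea is that, conditioned on $t_{h,i}(j) > u$, the confidence radius appearing inside $U_{h,i}(j)$ shrinks below half the gap $\Delta_{h,i}-\nu\rho^h$, so the event $\{U_{h,i}(j) > f^*\}$ forces a large deviation of the empirical mean $\hat{f}_{h,i}(j)$ above the true mean of the samples collected in the sub-tree rooted at $(h,i)$.

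First I would unwrap the definition of $U_{h,i}(j)$ from Algorithm~\ref{Al:HOO_batch}: the event $\{U_{h,i}(j) > f^*\} \cap \{t_{h,i}(j) = t\}$ is equivalent to
\[
\hat{f}_{h,i}(j) > f^* - \nu\rho^h - \sqrt{\dfrac{2\sigma^2 \log(L(j)+1)}{b\, t}}.
\]
I would then express $\hat{f}_{h,i}(j)$ as the average of the $t$ batch-means $\bar{y}_l = \frac{1}{b}\sum_{k=1}^{b} y_{l,k}$ corresponding to mini-batches in which a descendant of $(h,i)$ was pulled. Each $\bar{y}_l$ is $\sigma^2/b$-sub-Gaussian around $f(z_l)$, and sub-optimality of $(h,i)$ gives $f(z_l) \leq f^* - \Delta_{h,i}$ for every such $z_l\in\mathcal{P}_{h,i}$. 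After subtracting means, the event above implies
\[
\sum_{l} (\bar{y}_l - f(z_l)) > t(\Delta_{h,i}-\nu\rho^h) - \sqrt{\dfrac{2\sigma^2 t \log(L(j)+1)}{b}}.
\]
The hypothesis on $u$ in the lemma, together with the monotonicity $L(j)\leq L(n)$, ensures the subtracted square-root term is at most $t(\Delta_{h,i}-\nu\rho^h)/2$, so the right-hand side is at least $t(\Delta_{h,i}-\nu\rho^h)/2$.

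At this point I would apply the Hoeffding--Azuma inequality to the martingale $\sum_{l=1}^{t}(\bar{y}_l - f(z_l))$, whose differences are $\sigma^2/b$-sub-Gaussian, yielding a probability bound of the form $(L(n)+1)^{-4}$ on $\mathbb{P}\{U_{h,i}(j) > f^*,\, t_{h,i}(j)=t\}$ for each $t>u$; the exponent comes out right because plugging $u$ into the Azuma exponent produces the required logarithmic factor. A union bound over the at most $L(j)+1$ possible values of $t_{h,i}(j)$ between $u+1$ and $L(j)+1$ then gives the claimed $(L(j)+1)(L(n)+1)^{-4}$.

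The main technical obstacle is the same optional-skipping issue that already arises in Lemma~\ref{l:l15}: $t_{h,i}(j)$ is a stopping time, so one cannot directly concentrate a random-length sum. I would resolve it exactly as in the proof of Lemma~\ref{l:l15} --- insert the indicators $\mathbb{I}_{\{(h_l,i_l)\in\mathcal{C}(h,i)\}}$ inside the sum over all mini-batches up to round $j$, observe that each such indicator is measurable before the corresponding batch is drawn, and thereby preserve the martingale-difference structure required by Hoeffding--Azuma.
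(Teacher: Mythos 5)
Your proposal follows essentially the same route as the paper's proof of Lemma~\ref{l:l16}: unwrap $U_{h,i}(j)$, use the hypothesis on $u$ together with $L(j)\le L(n)$ to absorb the confidence radius into half the gap $\Delta_{h,i}-\nu\rho^h$, reduce to an upward deviation of the sum of the $\sigma^2/b$-sub-Gaussian batch means about $f(z_l)\le f^*-\Delta_{h,i}$, and finish with a union bound over the value of $t_{h,i}(j)$ combined with optional skipping and Hoeffding--Azuma. The one caveat --- which you inherit from the paper, since it defers this step to Lemma~16 of the HOO reference --- is that with the radius $\sqrt{2\sigma^2\log(L(j)+1)/(b\,t_{h,i}(j))}$ the Azuma exponent evaluated at the stated threshold on $u$ yields only $(L(n)+1)^{-1}$ per value of $t_{h,i}(j)$, so obtaining the advertised $(L(n)+1)^{-4}$ requires a larger constant in the lower bound on $u$ (or an $8\sigma^2$ rather than $2\sigma^2$ inside the confidence radius, as in the original HOO), rather than the constants ``coming out right'' as you assert.
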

\begin{proof} The $u$ in the statement satisfies $$\dfrac{\Delta_{h,i} - \nu \rho^h}{2} \geq \sqrt{\dfrac{2\sigma^2\log{(L(n) +1)}}{bu}},$$ thus $\sqrt{\dfrac{2\sigma^2\log{(L(j) +1)}}{bu}} + \nu \rho^h \leq \dfrac{\Delta_{h,i} + \nu \rho^h}{2}.$ Therefore,
\begin{align*}
    \mathbb{P} & \{U_{h,i}(j) >f^*\ and\ t_{h,i}(j)>u \}\\
    & = \mathbb{P}\big\{ \hat{f}_{h,i}(j) + \sqrt{\dfrac{2\sigma^2\log{(L(j) +1)}}{b\ t_{h,i}(j)}} + \nu\rho^h > f^*_{h,i} + \Delta_{h,i}\ \\
    &\hspace{6cm} \hbox{and}\ t_{h,i}(j)>u  \big\}\\
    & \leq \mathbb{P}\big\{ t_{h,i}(j)(\hat{f}_{h,i}(j) - f^*_{h,i}) > \dfrac{\Delta_{h,i}-\nu\rho^h}{2}t_{h,i}(j) \ \\
    & \hspace{6cm} \hbox{and}\ t_{h,i}(j)>u  \big\}.
\end{align*}
Now similar to the approach in proof of Lemma $16$ in~\cite{bubeck2011x} we can take care of the last inequality with union bound, optional skipping and the Hoeffding-Azuma inequality and obtain the following: $$\mathbb{P}\{U_{h,i}(j) >f^*\ and\ t_{h,i}(j)>u \} \leq (L(j) +1)(L(n) +1)^{-4},$$
where we conclude the proof of lemma~\ref{l:l16}.
\hfill $\blacksquare$
\end{proof}

Now combining the results of Lemma~\ref{l:l14}, Lemma~\ref{l:l15} and Lemma~\ref{l:l16} we obtain the following for sub-optimal nodes with $\Delta_{h,i}>\nu\rho^h$:

\begin{align*}
    \mathbb{E}[t_{h,i}(n)] \leq &\dfrac{8\sigma^2\log{(L(n) +1)}}{b(\Delta_{h,i}-\nu\rho^h)^2} + 1 \\
    & + \sum_{j=u+1}^{L(n) + 1} \bigg(( L(j) +1)(L(n) +1)^{-4} \\
    &\hspace{2cm}+ \sum_{s=1}^{L(j)} (L(j) +1)^{-3} \bigg).
\end{align*}
Now similar to the approach in the proof of the Lemma $16$ in~\cite{bubeck2011x} and using $L(n) = \lfloor \dfrac{n-1}{b} \rfloor$, we get the following:
\begin{align*}
    \mathbb{E}[t_{h,i}(n)] \leq \dfrac{8\sigma^2\log{(\lfloor \dfrac{n-1}{b} \rfloor +1)}}{b(\Delta_{h,i}-\nu\rho^h)^2} + 4,
\end{align*}
which concludes the proof of Lemma~\ref{l:4}.

\end{document}